\documentclass{article}





\usepackage[nonatbib,final]{neurips_2024}

\usepackage[utf8]{inputenc} 
\usepackage[T1]{fontenc}    
\usepackage{hyperref}       
\usepackage{url}            
\usepackage{booktabs}       
\usepackage{amsfonts}       
\usepackage{nicefrac}       
\usepackage{microtype}      
\usepackage{xcolor}         
\usepackage{graphicx}
\usepackage{amsmath}
\usepackage{subcaption}
\usepackage{multirow}
\usepackage{booktabs}
\usepackage{makecell}
\usepackage{float}
\usepackage{caption} 
\usepackage{algorithm}
\usepackage{algpseudocode}
\usepackage{setspace}
\usepackage{amsthm}
\usepackage{verbatim}
\usepackage{colortbl}

\newtheorem{theorem}{Theorem}[section]
\newtheorem{definition}{Definition}[section]
\newtheorem{lemma}{Lemma}[section]

\title{RCDM: Enabling Robustness for Conditional Diffusion Model}

%

\author{Weifeng~Xu\\
  College of Meteorology and Oceanography,
  National University of Defense Technology \\
  Changsha, CHN 410000 \\
  \texttt{xuweifeng.23@nudt.edu.cn} \\
   \And
   Xiang~Zhu \\
   College of Meteorology and Oceanography, National University of Defense Technology \\
   Changsha, CHN 410000 \\
   \texttt{zhuxiang@nudt.edu.cn} \\
   \And
   Xiaoyong Li \\
   College of Meteorology and Oceanography, National University of Defense Technology \\
   Changsha, CHN 410000 \\
   \texttt{sayingxmu@nudt.edu.cn} \\
}

\begin{document}

\maketitle

\begin{abstract}
The conditional diffusion model~(CDM) enhances the standard diffusion model by providing more control, improving the quality and relevance of the outputs, and making the model adaptable to a wider range of complex tasks. However, inaccurate conditional inputs in the inverse process of CDM can easily lead to generating fixed errors in the neural network, which diminishes the adaptability of a well-trained model. The existing methods like data augmentation, adversarial training, robust optimization can improve the robustness, while they often face challenges such as high computational complexity, limited applicability to unknown perturbations, and increased training difficulty. In this paper, we propose a lightweight solution, the Robust Conditional Diffusion Model (RCDM), based on control theory to dynamically reduce the impact of noise and significantly enhance the model's robustness. RCDM leverages the collaborative interaction between two neural networks, along with optimal control strategies derived from control theory, to optimize the weights of two networks during the sampling process. Unlike conventional techniques, RCDM establishes a mathematical relationship between fixed errors and the weights of the two neural networks without incurring additional computational overhead.  Extensive experiments were conducted on MNIST and CIFAR-10 datasets, and the results demonstrate the effectiveness and adaptability of our proposed model. Code and project site: \href{https://github.com/*****/RCDM}{https://github.com/*****/RCDM}.
\end{abstract}

\section{Introduction}
Conditional diffusion models (CDMs)~\cite{dhariwal2021diffusion, fu2024unveil, ho2022classifier} extend traditional diffusion models~\cite{ho2020denoising} by integrating additional guidance, such as labels or text, to direct the data generation process. Unlike unguided diffusion generating data through noise addition and denoising, conditional models produce more targeted outputs.
These models excel at generating data that matches given criteria, improving accuracy. They are applicable, in handling tasks like computer vision~\cite{ho2022cascaded, 2024arXiv240219481L, saxena2024surprising}, predicting time series~\cite{2024arXiv240103001C,rasul2021autoregressive}, and tackling complex interdisciplinary challenges~\cite{shi2021learning,wang2024multi,xu2024geometric,xu2022geodiff}. Their flexibility and precision make them a powerful tool in diverse domains.

However, conditional diffusion models face severe challenges in real-world scenarios due to noisy data~\cite{chen2024overview,xie2024robust}. Neural networks often experience performance degradation in the presence of noise and outliers, which limits their reliability. Robust learning techniques aim to address this issue by enhancing adaptability to data disturbances~\cite{blau2022threat,9156938,wang2022guided,wu2022guided}. The denoising property of diffusion models during the reverse process has been leveraged for adversarial purification~\cite{carlini2022certified,lin2024robust,2024arXiv240414309L}, partially replacing traditional adversarial training~\cite{2024arXiv240408980C, 2024arXiv240112461Y, yoon2021adversarial}, and thus enhancing model robustness. Nevertheless, this approach has limitations when applied to unknown perturbations. Researchers are exploring Distributionally Robust Optimization (DRO)    ~\cite{delage2010distributionally,kuhn2019wasserstein,lin2022distributionally} for deeper robustness assurance by considering comprehensive uncertainty in data distributions. However, implementing DRO may require more computational resources. Complementing DRO, enhancing the intrinsic robustness of  conditional diffusion models through more resilient model structures can mitigate computational overhead and reduce adversarial example generation, providing inherent resistance to input errors.

This study utilizes the framework of deep reinforcement learning~\cite{black2023training, van2016deep}, employing the collaborative interaction of two neural networks to enhance model robustness. Concurrently, it incorporates optimal control strategies~\cite{wang2024adaptive}  from control theory, optimizing the weights of the two neural networks through theoretical analysis and validation. Extensive experiments demonstrate the effectiveness and adaptability of our proposals across diverse datasets and network architectures.


\section{Related Work}

\textbf{Conditional diffusion models.} CDMs~\cite{niu2024acdmsr, yang2024lossy, zhang2023adding} are a significant advancement in diffusion models, enabling the generation of data based on specific inputs for greater control over the generative process. CDMs integrates conditioning information into the training process of a neural network, enabling the model to learn the underlying data distribution specific to a given condition (e.g., class label or text description). It allows for controlled data generation during the model's reverse process. The model consists of two processes, the forward process and the reverse process, as shown in Figure~\ref{fig.1.1}. 
Therefore, CDMs have been applied innovatively to incorporate conditional inputs during neural network training~\cite{dhariwal2021diffusion,  ho2022classifier, liu2023more}. They utilize unconditional networks to modulate the diversity and relevance of the outcomes, which has been a key aspect of their success. 
The Denoising Diffusion Probabilistic Models (DDPM)~\cite{ho2020denoising,nichol2021improved,sohl2015deep} is a prominent research framework for CDMs. It works alongside other frameworks such as score-based generative models (SGM)~\cite{song2020improved} and stochastic differential equations (SDE)~\cite{song2020score}, which are interconnected and complementary to each other.Advancements in CDMs have been made through various improvements, including enhancements in neural network architecture~\cite{bao2023all, gao2023masked, peebles2023scalable}, efficient sampling~\cite{dockhorn2021score,salimans2022progressive}, likelihood maximization~\cite{huang2021variational, lu2022maximum}, and integration with other generative models~\cite{bortoli2022riemannian,jo2022score}. For instance, researchers have worked on scalable architectures, masked autoencoders, and other innovations that have pushed the boundaries of what is possible with these models. These improvements have been documented in various studies, contributing to the ongoing development and refinement of CDMs.
The specific mathematical mechanism of CDMs is shown in Appendix~\ref{Diffusion model}.
\begin{figure}[!ht]
	\centering
	\includegraphics[width=0.9\textwidth]{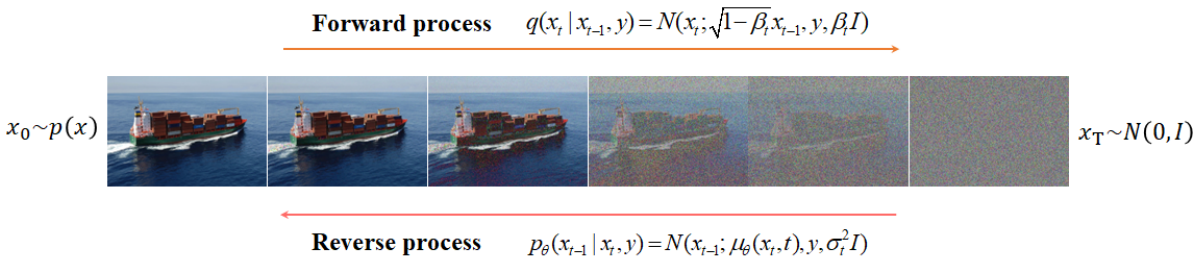}
	\caption{The forward and reverse processes of conditional diffusion models.} 
	\label{fig.1.1}
\end{figure}

\textbf{Robust learning.} In deep learning, model robustness is a critical research goal to enhance their resistance against input disturbances, noise, and adversarial attacks. Researchers used many strategies like data augmentation~\cite{zheng2024improving,zhou2023adversarial} for better adaptability through attention mechanisms~\cite{zhang2024federated}, regularization~\cite{han2024selective, hao2024adversarially}, and network architecture design~\cite{kengne2024robust}.
Among these strategies, diffusion models have emerged as a promising approach due to their ability to denoise data during the adversarial purification process~\cite{carlini2022certified,lin2024robust,2024arXiv240414309L}, thereby minimizing the negative impact of data errors on downstream tasks. However, diffusion models have inherent limitations when it comes to robustness. For instance, they struggle to handle erroneous inputs during the inference phase in image generation tasks. It highlights the need for further research and optimization to enhance their robustness.
Therefore, researchers are actively exploring various techniques and approaches to enhance the robustness of deep learning models, aiming to improve their resistance against different challenges and ensure reliable performance in real-world applications.

\textbf{PID control theory.} The Proportional Integral Derivative (PID) control theory is a cornerstone concept in automatic control, which achieves effective control of a system through the combination of proportional, integral, and derivative parameters. This theory has been extensively utilized across traditional domains, including industrial process control~\cite{liu2023general}, robotics and automation~\cite{chotikunnan2023optimizing}, and aerospace~\cite{gun2023attitude}, while also undergoing continuous evolution and expansion in tandem with technological advancements. For instance, adaptive PID control~\cite{ghamari2023lyapunov} can automatically adjust parameters based on changes in system performance, while fuzzy PID control~\cite{han2023fuzzy} and neural network PID control optimize control~\cite{yakout2024neural} performance by incorporating fuzzy logic and neural network technologies. Furthermore, intelligent PID control enhances the adaptability and robustness of the controller by integrating technologies such as machine learning. Despite facing challenges such as model uncertainty, multivariable control, and networked control, research, and applications of PID control theory continue to develop, demonstrating strong vitality and broad prospects.

\section{Method}

\subsection{Problem Analysis}
The trained neural network in the forward process enters into the reverse process with the following inference process.
\begin{equation}
	x_{t-1} = \frac{1}{\sqrt{\alpha_{t}}} \Bigg[ 
	x_{t} - \frac{1 - \alpha_{t}}{\sqrt{1 - \alpha_{t}}} 
	\Big((1 + w)\varepsilon_{\theta}(x_{t}, y, t) - w\varepsilon_{\theta}(x_{t}, t)\Big) 
	\Bigg] + \sigma_{t} z
	\label{1.4}
\end{equation}

The mathematical mechanism underlying the reverse sampling process in CDMs involves a trained conditional neural network and an unconditional neural network, denoted as $\varepsilon_\theta\left(x_t,y,t\right)$ and $\varepsilon_\theta\left(x_t,t\right)$, respectively. The parameter $w$ is manually set to regulate the weighting between the two networks, thus balancing the diversity and relevance of the generated outcomes.

However, when the two well-trained neural networks encounter inaccurate input data during the sampling phase, it can result in a fixed error in the neural network's output. The concern arises regarding whether this error $\Delta$, accumulating over steps $T$, could potentially lead to uncontrollable effects in the final generation of the diffusion model. 
 \begin{theorem}
	Let $\beta_T=\sqrt{1-\alpha_T}$ and $\overline{\beta}_T=\sqrt{1-\overline{\alpha}_T}$. Due to the inaccuracy of the input data, the error $\Delta$ generated by the neural network during the sampling process can accumulate more than $-\frac{1-a_T}{\sqrt{\overline{\alpha}_T(1-\overline{\alpha}_T)}}\Delta$   after $T$ steps of iterations, which leads to uncontrolled model performance.
	\label{theorem 3.1}
\end{theorem}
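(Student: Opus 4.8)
The plan is to treat the fixed per-step error as an exogenous input to the reverse iteration~\eqref{1.4}, reduce the propagation of that error to a linear difference equation with scalar coefficients, solve it in closed form, and read off a lower bound on the final deviation. First I would observe that the classifier-free guidance combination $(1+w)\varepsilon_\theta(x_t,y,t)-w\varepsilon_\theta(x_t,t)$ is affine in the two network outputs, so if both networks, fed the same inaccurate input $x_t$, err by the fixed amount $\Delta$, the composite predicted noise is off by exactly $\big((1+w)-w\big)\Delta=\Delta$; in particular the guidance weight $w$ provides no attenuation, and the effective perturbation of the predicted noise at every step is simply $\Delta$.

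Next I would run~\eqref{1.4} along two coupled trajectories, the ideal $x_t$ and the corrupted $\hat x_t$, driven by the same noise draws $z$, and set $e_t:=\hat x_t-x_t$. Subtracting the two copies of~\eqref{1.4}, the stochastic term $\sigma_t z$ cancels and, writing the guidance coefficient as $\tfrac{1-\alpha_t}{\overline{\beta}_t}$ with $\overline{\beta}_t=\sqrt{1-\overline{\alpha}_t}$, one is left with
\begin{equation}
e_{t-1}=\frac{1}{\sqrt{\alpha_t}}\Big(e_t-\frac{1-\alpha_t}{\overline{\beta}_t}\,\Delta\Big),\qquad e_T=0 .
\label{eq:errrec}
\end{equation}
I would then unroll~\eqref{eq:errrec} from $t=T$ down to $t=1$: the homogeneous factors accumulated along the way telescope, and using $\prod_{s=1}^{t}\alpha_s=\overline{\alpha}_t$ the contribution of the error injected at step $t$ becomes $\tfrac{1-\alpha_t}{\sqrt{\overline{\alpha}_t(1-\overline{\alpha}_t)}}\Delta$, so the total accumulated deviation is
\begin{equation}
e_0=-\Big(\sum_{t=1}^{T}\frac{1-\alpha_t}{\sqrt{\overline{\alpha}_t}\,\sqrt{1-\overline{\alpha}_t}}\Big)\Delta
   =-\Big(\sum_{t=1}^{T}\frac{1-\alpha_t}{\sqrt{\overline{\alpha}_t(1-\overline{\alpha}_t)}}\Big)\Delta ,
\label{eq:errsum}
\end{equation}
whose $t=T$ summand is precisely $\tfrac{1-\alpha_T}{\sqrt{\overline{\alpha}_T(1-\overline{\alpha}_T)}}\Delta$.

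To conclude, since $0<\alpha_t<1$ and $0<\overline{\alpha}_t<1$ for every $t$, every summand in~\eqref{eq:errsum} is strictly positive, so the total dominates any single one; isolating $t=T$ gives $|e_0|\ge \tfrac{1-\alpha_T}{\sqrt{\overline{\alpha}_T(1-\overline{\alpha}_T)}}\,|\Delta|$, with strict inequality once $T\ge 2$, which is exactly the asserted lower bound on the accumulated error. For the ``uncontrolled performance'' part I would further note that a standard noise schedule keeps $1-\alpha_T$ bounded away from $0$ while forcing $\overline{\alpha}_T\to 0$, so this bound --- already just one term of~\eqref{eq:errsum} --- diverges as $T$ grows, whereas the target signal $x_0$ stays bounded; the remaining positive terms of the sum only worsen the blow-up.

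The step I expect to be the main obstacle is that the clean recursion~\eqref{eq:errrec} rests on a modeling assumption: in reality $\varepsilon_\theta$ is evaluated at the already-corrupted state $\hat x_t$, so its output deviation is $\Delta$ plus a state-dependent term $\varepsilon_\theta(\hat x_t,\cdot)-\varepsilon_\theta(x_t,\cdot)$ rather than the constant $\Delta$. For the lower bound as stated I would simply adopt the theorem's hypothesis that the per-step output error is the fixed $\Delta$; a Lipschitz condition on $\varepsilon_\theta$ in its first argument then lets one also control the full error and confirm that the dominant effect is the same amplification by $\prod_{s=1}^{t}\alpha_s^{-1/2}=\overline{\alpha}_t^{-1/2}$ seen in~\eqref{eq:errsum}. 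A secondary, easier point is the stochastic term: coupling the two runs with identical $z$ makes $\sigma_t z$ cancel exactly, and under independent noise one argues in expectation, where the deterministic accumulation~\eqref{eq:errsum} is unaffected.
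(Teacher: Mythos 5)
Your proposal is correct and follows essentially the same route as the paper: you unroll the reverse recursion with a fixed per-step perturbation $\Delta$ (the paper does this by induction, listing $X_T,X_{T-1},\dots,X_0$), arrive at the same accumulated sum $-\sum_{t=1}^{T}\frac{1-\alpha_t}{\sqrt{\overline{\alpha}_t(1-\overline{\alpha}_t)}}\Delta$, and isolate the dominant $t=T$ term $-\frac{1-\alpha_T}{\sqrt{\overline{\alpha}_T(1-\overline{\alpha}_T)}}\Delta$, which blows up as $\overline{\alpha}_T\to 0$. Your added remarks (the guidance weights giving $(1+w)-w=1$ so $w$ does not attenuate the fixed error, coupling the two trajectories so $\sigma_t z$ cancels, and the caveat about the state-dependent part of the network error) are sensible refinements of the same argument rather than a different approach.
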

\begin{proof}
	We utilize mathematical induction to examine the accumulation of the final error $\Delta$. The detail proof  is provided in Appendix~\ref{Proof for Theorem 3.1}.
\end{proof}
Based on the aforementioned theorem, we have determined that noise from inaccurate inputs is gradually amplified through reverse sampling, adversely affecting the results of the conditional diffusion model. Our objective is to mitigate the amplification and propagation of fixed errors during the sampling process, thereby enhancing the robustness of the conditional diffusion model. 

\subsection{Robust Conditional Diffusion Model Framework}
Our proposed model, RCDM, enhances the performance through a continuous two-stage strategy optimization, which integrates the sequential execution of forward and reverse processes. Initially, the model undergoes iterative training through the forward process, accumulating progressively deepened learning experiences. Subsequently, upon reaching a preset threshold of iterations, it transitions seamlessly to the reverse process for sampling generation, as detailed in Figure~\ref{fig.1}.

\begin{figure}
	\centering
	\hspace{-0.0cm}\includegraphics[width=1.0\textwidth]{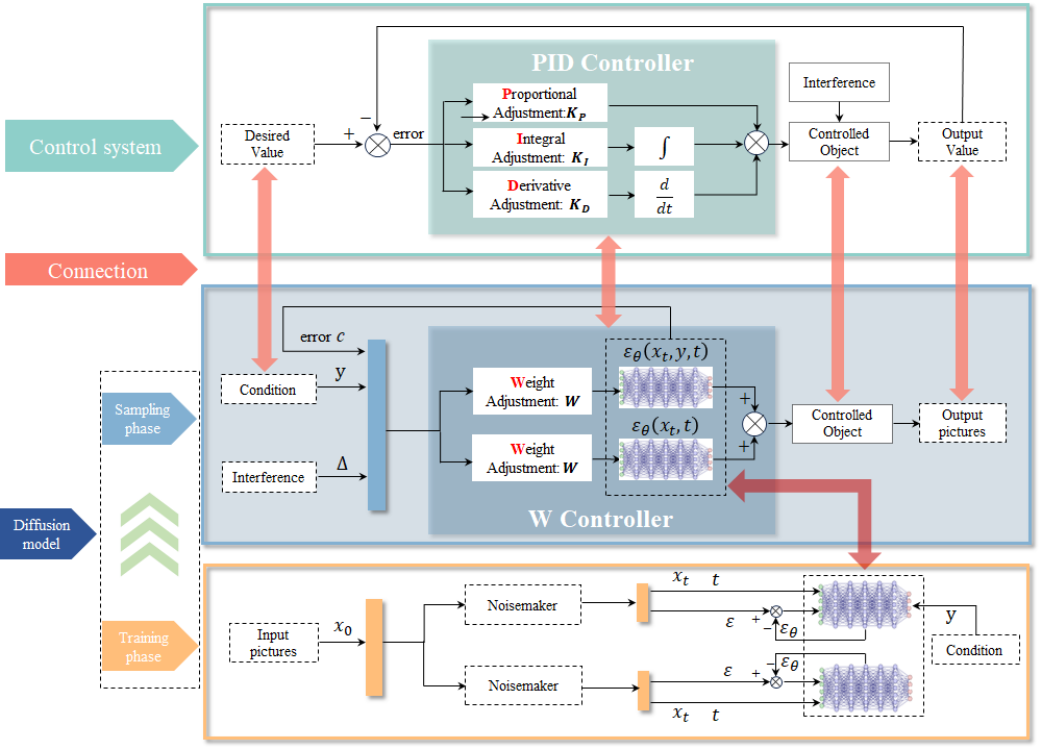}
	\caption{Robust conditional diffusion modeling framework. }\medskip
	\label{fig.1}
			
	\end{figure}
	
In the forward phase of a diffusion model, the input image $x_{0}$ undergoes progressive noise addition to generate a collection $\{x_{t},t,\varepsilon\}$, $\{x_{t},t,\varepsilon,y\}$ that trains both unconditional $\varepsilon_\theta(x_t,t)$ and conditional $\varepsilon_\theta(x_t,y,t)$ neural networks. After sufficient training, the process shifts from the forward phase to the reverse phase, which is also known as the sampling phase. However, inaccuracies in the input data can lead to fixed errors $\Delta $ in the neural network, which can affect the final generation of the image. Therefore, RCDM enhances the ability to resist disturbances by adjusting the parameters $ w $ of the two network weights, drawing on the PID control method from control theory. Ultimately, the image is output through the joint iterative action of the Gaussian distribution $z$ and the control signal $\left[(1+w)\varepsilon_{\theta}(x_{t},y,t)-w\varepsilon_{\theta}(x_{t},t)\right]$.

In the architecture of a robust conditional diffusion model, the design of the filter is one of the key innovations for enhancing the model's robustness. The role of the filter is to optimize the data reconstruction quality in the reverse process by mitigating the impact of errors. In this paper, we concentrate on the derivation of the optimal filter parameters $w$ through a rigorous theoretical framework and associated proof. The objective is to mitigate the effects of input errors inherent in the reverse process, thereby improving the robustness of the data reconstruction.

\subsection{Mathematical Modeling}
In this part, we will utilize mathematics to characterize the distributional properties of neural networks fitted during the forward process of diffusion models. This study seeks to clarify the discrepancies arising in the outputs of the trained network when erroneous input data are introduced during the generative phase. Furthermore, it aims to investigate the impact of these discrepancies on the model's overall performance and ability to generalize. Then, we will mainly study the extent to which specific parameters decline the perturbations caused by input errors during the generation, to gain a profound understanding of the behavioral mechanisms of neural networks in the presence of input data discrepancies and their subsequent impact on model outputs.
\begin{definition}
Assume that $\varepsilon_1$ and $\varepsilon_2$  represent the truths, that is, the Gaussian noise added at a step during the forward process, such that $\varepsilon_{1}\sim(0,I)$ and $\varepsilon_{2}\sim(0,I)$. Here, $\Delta_1$  and $\Delta_2$ denote the discrepancies, which are the errors generated by the neural network architecture during the learning process. And the inaccuracy of the input data in the sampling phase causes the neural network to produce a fixed error $\Delta$.
\end{definition}
 \begin{theorem}
 	The control signal  $\left[\left((1+w)\varepsilon_{\theta}(x_{t},y,t)-w\varepsilon_{\theta}(x_{t},t)\right)\right]$ conforms to distribution $O\sim\left[(1+w)\Delta_1-w\Delta_2,((1+w)^2+w^2)I\right]$,  $O_L\sim\left[0,((1+w)^2+w^2)I\right]$ in the ideal case, and $O_G\sim\left[(1+w)\Delta_1-w\Delta_2+\Delta,((1+w)^2+w^2)I\right]$ in the presence of interference.
 	\label{theorem 4.1}
 \end{theorem}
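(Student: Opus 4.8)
\textbf{Proof proposal for Theorem~\ref{theorem 4.1}.}

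The plan is to treat the control signal as a linear combination of two (approximately Gaussian) random vectors produced by the conditional and unconditional networks, and then to propagate means and covariances through that linear map. First I would invoke the Definition above: during training the conditional network $\varepsilon_\theta(x_t,y,t)$ fits the true noise $\varepsilon_1\sim(0,I)$ up to a learned discrepancy $\Delta_1$, so I model its output as $\varepsilon_1+\Delta_1$ with mean $\Delta_1$ and covariance $I$; symmetrically the unconditional network output is modeled as $\varepsilon_2+\Delta_2$ with mean $\Delta_2$ and covariance $I$. The control signal is $(1+w)\bigl(\varepsilon_1+\Delta_1\bigr)-w\bigl(\varepsilon_2+\Delta_2\bigr)$. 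Taking expectations term by term and using $\mathbb{E}[\varepsilon_1]=\mathbb{E}[\varepsilon_2]=0$ gives mean $(1+w)\Delta_1-w\Delta_2$, which is the claimed location parameter of $O$.

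Next I would compute the covariance. Treating $\varepsilon_1$ and $\varepsilon_2$ as independent (they are the injected noises at the step, and $\Delta_1,\Delta_2$ are deterministic offsets contributing nothing to the variance), the covariance of the scaled sum is $(1+w)^2\,\mathrm{Cov}(\varepsilon_1)+w^2\,\mathrm{Cov}(\varepsilon_2)=\bigl((1+w)^2+w^2\bigr)I$, giving the stated scale parameter. The ideal (label-free) case $O_L$ is the sub-case obtained by setting the learned discrepancies to zero, $\Delta_1=\Delta_2=0$, so the mean collapses to $0$ while the covariance is unchanged, yielding $O_L\sim\bigl[0,((1+w)^2+w^2)I\bigr]$. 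For the interference case $O_G$, I would add the fixed sampling-time error $\Delta$ from the Definition as an additional deterministic shift of the network output caused by the inaccurate input; since $\Delta$ is a constant offset it simply translates the mean to $(1+w)\Delta_1-w\Delta_2+\Delta$ and leaves the covariance untouched, giving the third distribution.

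I expect the main obstacle to be justifying, rather than merely asserting, the independence (or at least uncorrelatedness) of the two network outputs' stochastic parts and the additivity of $\Delta$: one has to argue that the conditional and unconditional branches share the backbone $\varepsilon_\theta$ yet their fluctuations about the respective targets can be taken as independent unit-covariance Gaussians, and that the perturbation $\Delta$ enters linearly (to first order) so that it does not interact with the variance. I would handle this by appealing to the modeling assumptions made explicit in the Definition (the $\varepsilon_i$ are the i.i.d.\ standard Gaussians injected in the forward process, and $\Delta,\Delta_1,\Delta_2$ are treated as fixed bias terms), so that the result is a direct consequence of the linearity of expectation and the bilinearity of covariance applied to the expression in~\eqref{1.4}; the remaining steps are the routine computations sketched above and would be deferred to the appendix.
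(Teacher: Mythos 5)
Your proposal matches the paper's own argument essentially step for step: the paper likewise writes $\varepsilon_\theta(x_t,y,t)=\varepsilon_1+\Delta_1$ and $\varepsilon_\theta(x_t,t)=\varepsilon_2+\Delta_2$ with $\varepsilon_1,\varepsilon_2$ independent standard Gaussians, forms $O=(1+w)(\varepsilon_1+\Delta_1)-w(\varepsilon_2+\Delta_2)$, and reads off the mean $(1+w)\Delta_1-w\Delta_2$ and covariance $((1+w)^2+w^2)I$, obtaining $O_L$ by setting $\Delta_1=\Delta_2=\Delta=0$ and $O_G$ by adding the fixed input-induced error $\Delta$ to the mean. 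The independence and additive-bias assumptions you flag as the main obstacle are simply asserted in the paper (via the Definition), so your treatment is, if anything, slightly more explicit about the same modeling step.
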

\begin{proof}
	The training functions of the two networks ($\varepsilon_\theta(x_t,t)$,$\varepsilon_\theta(x_t,y,t)$) obtained from the forward process training are:
	\begin{equation}
		\mathbb{E}_{{x}_0,{y}\sim\tilde{p}({x}_0,{y}),{\varepsilon}\sim\mathcal{N}({0},{I})}\left[\left\|{\varepsilon}-{\varepsilon}_\theta(\bar{\alpha}_t{x}_0+\bar{\beta}_t{\varepsilon},y,t)\right\|^2\right]
		\label{1}
	\end{equation}
	\begin{equation}
		\mathbb{E}_{{x}_0\sim\tilde{p}({x}_0),{\varepsilon}\sim\mathcal{N}({0},{I})}\left[\left\|\varepsilon-{\varepsilon}_\theta(\bar{\alpha}_t{x}_0+\bar{\beta}_t{\varepsilon},t)\right\|^2\right]
		\label{2}
	\end{equation}
	
	$\varepsilon_\theta\left(x_t,y,t\right)=\varepsilon_1+\Delta_1$,  $\varepsilon_\theta\left(x_t,t\right)=\varepsilon_2+\Delta_2$. It is evident that $\varepsilon_1$ is the Gaussian noise added at a step of the forward process under the condition $y$, and $\varepsilon_2$ is the Gaussian noise added under the absence of condition $y$. These two distributions are independent and uncorrelated. Note that, $O=(1 + w)(\varepsilon_1 + \Delta_1) $-$ w(\varepsilon_2 + \Delta_2)$. Then, the form of its distribution can be depicted as Equation~\ref{3}. 
	
	\begin{equation}
		O\sim\left[(1+w)\Delta_1-w\Delta_2,((1+w)^2+w^2)I\right]
		\label{3}
	\end{equation}
	
	In the ideal situation, a well-trained neural network would not produce errors $\Delta_1=0, \Delta_2=0, \Delta=0$, then the expected output distribution is $O_L$. However, various sources of error may arise in the neural network in the sampling phase, i.e., the trained neural network itself has an error $c$, and the inaccuracy of the input data in the sampling phase causes the neural network to produce a fixed error $\Delta$, noting that the distribution is $O_G$. 
\end{proof}

Subsequently, we focus on how the variation in the parameter \( w \) affects the generation outcome, then focus on the discrepancy between $O_L$ and $O_G$ at each step. This analysis can assist us in evaluating the performance of the neural network model and in adjusting the parameter $w$ to minimize the error within the generation outcome. Then, we use the intersection area $J(w)$ between the distributions $O_L$ and $O_G$ as a more intuitive and illustrative measure of their similarity compared to the Kullback-Leibler~(KL) divergence.

\begin{lemma}
	Let \( d(w) = (1 + w) \Delta_1 - w \Delta_2 + \Delta = 2x_a \), where the sign of \( d(w) \) is consistent with the sign of \( x_a \). By applying the method for solving the integral probability density function of the Gaussian distribution, we derive the solution for the half intersection area \( I(w) \) between the distributions $O_L$ and $O_G$  as follows.
	\begin{align}
		I(w) =\frac{1}{2}J(w)= \frac{1}{2} e^{-\frac{d^2(w)}{8\sigma^2(w)}}
	 \label{function}
	\end{align}
	
    \label{lemma 4.1}
\end{lemma}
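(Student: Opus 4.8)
The plan is to reduce the overlap computation to a one–dimensional Gaussian integral and evaluate it in closed form. First I would write out the two densities explicitly: by Theorem~\ref{theorem 4.1} both $O_L$ and $O_G$ are isotropic Gaussians with the common variance $\sigma^2(w):=(1+w)^2+w^2$, with $O_L$ centered at the origin and $O_G$ centered at $d(w)=(1+w)\Delta_1-w\Delta_2+\Delta$. Since the covariance is a scalar multiple of the identity and the two means differ only along the direction of $d(w)$, the problem factors: the components orthogonal to $d(w)$ contribute a unit mass and only the one–dimensional marginals along $d(w)$ matter. So it suffices to analyze $f_L(x)=\tfrac{1}{\sqrt{2\pi}\,\sigma(w)}e^{-x^2/(2\sigma^2(w))}$ and $f_G(x)=\tfrac{1}{\sqrt{2\pi}\,\sigma(w)}e^{-(x-d(w))^2/(2\sigma^2(w))}$.

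Second I would locate the crossing point. Setting $f_L(x)=f_G(x)$ and using that the two variances coincide, the quadratic terms cancel and the equation becomes linear in $x$, with unique solution $x=d(w)/2$. This justifies writing $d(w)=2x_a$ with $x_a$ the abscissa of the crossing, and since $x_a=d(w)/2$ the sign of $x_a$ agrees with that of $d(w)$, as claimed. Geometrically $f_L$ dominates on one side of $x_a$ and $f_G$ on the other, so (taking WLOG $x_a\ge 0$)
\begin{align}
	J(w)=\int_{-\infty}^{\infty}\min\{f_L(x),f_G(x)\}\,dx=\int_{-\infty}^{x_a}f_G(x)\,dx+\int_{x_a}^{\infty}f_L(x)\,dx .
\end{align}
The reflection $x\mapsto d(w)-x$ maps $f_L$ to $f_G$ and fixes $x_a$, so the two pieces are equal and each equals $\tfrac12 J(w)=I(w)$; thus $I(w)=\int_{x_a}^{\infty}f_L(x)\,dx$.

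Finally I would evaluate this tail integral. Completing the square gives $\sqrt{f_L(x)f_G(x)}=e^{-d^2(w)/(8\sigma^2(w))}\,\tfrac{1}{\sqrt{2\pi}\,\sigma(w)}e^{-(x-x_a)^2/(2\sigma^2(w))}$, which is, up to the constant $e^{-d^2(w)/(8\sigma^2(w))}$, a unit–mass Gaussian centered at $x_a$; integrating over $\mathbb{R}$ yields the Bhattacharyya–type identity $\int\sqrt{f_L f_G}\,dx=e^{-d^2(w)/(8\sigma^2(w))}$. Invoking the standard exponential estimate for the Gaussian tail that the paper adopts (equivalently, the exponential form of the $Q$-function), $\int_{x_a}^{\infty}f_L(x)\,dx=\tfrac12 e^{-x_a^2/(2\sigma^2(w))}=\tfrac12 e^{-d^2(w)/(8\sigma^2(w))}$, since $x_a=d(w)/2$, which is exactly the claimed expression for $I(w)$, and hence $J(w)=e^{-d^2(w)/(8\sigma^2(w))}$.

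I expect the main obstacle to be this last step: the exact overlap coefficient of two equal–variance Gaussians is $2Q(x_a/\sigma(w))$, which is not literally $\tfrac12 e^{-d^2(w)/(8\sigma^2(w))}$ but is tightly upper bounded by it (a Chernoff–type bound on the $Q$-function) and coincides with it only under the exponential approximation of the tail. So the delicate point is to state precisely which ``method for solving the integral of the Gaussian density'' is being used — either the exponential ($Q$-function) approximation of the tail, or a direct identification of $J(w)$ with the Bhattacharyya affinity $\int\sqrt{f_L f_G}$ — and then to carry the constant $\sigma^2(w)=(1+w)^2+w^2$ through the exponent correctly so that the dependence on $w$ in the final formula is right.
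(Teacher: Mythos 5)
Your reduction is essentially the paper's: both arguments identify the crossing point $x_a=d(w)/2$ from the equal-variance densities, use the symmetry of the two tails to write the half-overlap as $I(w)=\int_{d(w)/2}^{\infty}\frac{1}{\sqrt{2\pi}\,\sigma(w)}e^{-x^2/(2\sigma^2(w))}\,dx$ (with the mirrored integral when $x_a<0$), and then produce the closed form $\tfrac12 e^{-d^2(w)/(8\sigma^2(w))}$. The only divergence is the last step. The paper squares the tail integral and passes to polar coordinates, integrating $\theta$ over $[0,\pi/2]$ and $r$ over $[\sqrt{2}\,d(w)/(2\sigma(w)),\infty)$; for $d(w)>0$ this replaces the square region $[d/(2\sigma),\infty)^2$ by the circumscribed sector, so what it actually establishes is the Chernoff-type bound $Q(x)\le\tfrac12 e^{-x^2/2}$ presented as an equality (it is exact only at $d=0$). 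You instead invoke the exponential ($Q$-function) approximation, or equivalently identify $J(w)$ with the Bhattacharyya affinity $\int\sqrt{f_Lf_G}$, and you explicitly flag that the exact overlap is $2Q\!\left(d(w)/(2\sigma(w))\right)$ rather than $e^{-d^2(w)/(8\sigma^2(w))}$. That caveat is precisely the step the paper's polar-coordinate computation glosses over, so your proposal is correct as a derivation of the stated formula and, if anything, more candid about the approximation on which the equality rests; the monotonicity analysis in the subsequent lemmas depends only on the exponent $-d^2(w)/(8\sigma^2(w))$, so either reading supports the downstream use.
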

The proof of the following properties of \( I(w) \) is provided in Appendix~\ref{Lemma-3.1}.

Function \( I(w) \) quantifies the degree of overlap between the expected output distribution $O_L$ and the actual generated distribution $O_G$, an intuitive and concrete representation of their similarity, as depicted in Figure~\ref{fig.2}. An increase in the parameter \( w \) within a certain range enhances function \( I(w) \), indicating that the parameter \( w \) can improve the model's robustness to disturbances by aligning the actual generated distribution more closely with the ideal distribution, even in the presence of interference. As depicted in Figure~\ref{fig.2}, $x_a$ represents the $x$-coordinate of the intersection between distributions $O_L$ and $O_G$. It is determined by solving the probability density functions of the Gaussian distributions, yielding $x_a=\overline{x_{a}}=\frac{(1+w)\Delta_1-w\Delta_2+\Delta}{2}$.


\begin{figure}
	\centering
	\includegraphics[width=0.9\textwidth]{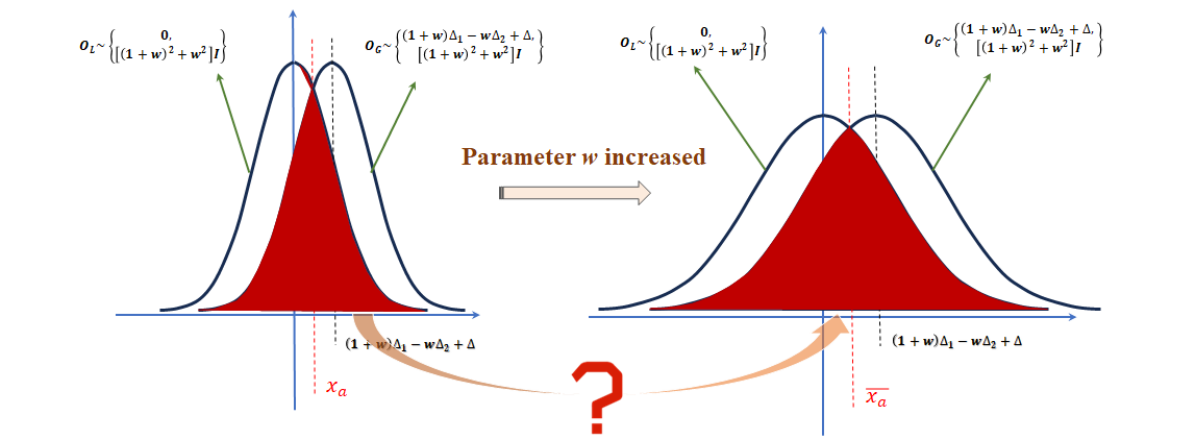}
	\caption{Discrepancy schematic distribution $O_L$ vs. $O_G$.} 
	\label{fig.2}
\end{figure}






\subsubsection{Monotonic Constraint}
Subsequently, we will explore the range of values for the $w$ for which the function $I(w)$ is monotonically increasing. Therefore, it will increase the intersection area between $O_L$ and $O_G$, then better align with the ideal scenario and enhance the ability to suppress interference as $w$ increases.

\begin{lemma}
	Let $\sigma(w) = \left[(1+w)^2 + w^2\right]^{\frac{1}{2}}$ be the standard deviation and $d(w) = (1+w)\Delta_1 - w\Delta_2 + \Delta$. Assume that the monotonic of the function $I(w)$ is independent of $\sigma(w)$, then the function $I(w)$ is monotonically increasing within the range $\left[0,\frac{\left|\Delta\right|}{2c}-\frac12\right]$.
	\label{lemma 4.2}
\end{lemma}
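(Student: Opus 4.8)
The plan is to treat $I(w) = \tfrac12 e^{-d^2(w)/(8\sigma^2(w))}$ from Lemma~\ref{lemma 4.1} and, under the stated simplifying assumption that the monotonic behavior of $I(w)$ is governed by $d(w)$ alone (with $\sigma(w)$ treated as slowly varying), reduce the question of when $I(w)$ is increasing to the question of when the exponent $-d^2(w)/(8\sigma^2(w))$ is increasing, i.e.\ when $d^2(w)$ is decreasing, i.e.\ when $|d(w)|$ is moving toward $0$. So the first step is to record $d(w) = (1+w)\Delta_1 - w\Delta_2 + \Delta = \Delta + \Delta_1 + w(\Delta_1 - \Delta_2)$, an affine function of $w$, and note that $d^2(w)$ is a upward parabola in $w$ whose minimum (the zero of $d$) sits at $w^\star = -(\Delta+\Delta_1)/(\Delta_1-\Delta_2)$; on the interval from $0$ up to $w^\star$ (assuming $w^\star > 0$) the quantity $d^2(w)$ decreases, hence the exponent increases, hence $I(w)$ increases.

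The second step is to connect this generic threshold $w^\star$ with the explicit endpoint $\tfrac{|\Delta|}{2c} - \tfrac12$ claimed in the statement. This forces me to use the error-model conventions set up around Definition~\ref{lemma 4.1}'s surrounding text and Theorem~\ref{theorem 4.1}: the network's intrinsic training error is a common magnitude $c$, so I expect the intended substitution is $\Delta_1 - \Delta_2$ (or the effective combined discrepancy) to be replaced by something proportional to $c$, and $\Delta_1$ itself to be absorbed or bounded by $c$ as well. Concretely, I anticipate that under the paper's sign conventions $d(w)$ is written as $\Delta - c(1 + 2w)$ (or $\Delta + c(1+2w)$, depending on the sign of $\Delta$), so that $d(w) = 0$ gives exactly $w = \tfrac{|\Delta|}{2c} - \tfrac12$; I would verify this by matching coefficients of $w^0$ and $w^1$ between $\Delta + \Delta_1 + w(\Delta_1-\Delta_2)$ and $\Delta \mp c \mp 2cw$, which pins down $\Delta_1 = \mp c$ and $\Delta_1 - \Delta_2 = \mp 2c$. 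Then the left endpoint $w=0$ is handled by checking $d(0) = \Delta + \Delta_1$ has the same sign as $\Delta$ and $|d(0)| \ge |d(w)|$ for all $w \in [0, w^\star]$, which is automatic from the parabola picture.

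The third step is bookkeeping on signs: Lemma~\ref{lemma 4.1} introduced $d(w) = 2x_a$ with $\mathrm{sign}(d) = \mathrm{sign}(x_a)$, and the monotonicity argument should be phrased so it does not depend on whether $\Delta > 0$ or $\Delta < 0$ — using $|d(w)|$ throughout and the fact that $d$ is affine and crosses zero exactly once makes the two cases symmetric. I would state the argument once for $d(w) \ge 0$ on $[0,w^\star]$ (so $|d| = d$ is decreasing there because its slope $\Delta_1 - \Delta_2 = -2c < 0$) and remark the mirror case is identical. Finally I would differentiate $I$ explicitly to confirm the sign: $I'(w) = -\tfrac{1}{8\sigma^2}\,(d^2)'(w)\,I(w)$ under the stated assumption that $\sigma$ is held fixed, and $(d^2)'(w) = 2 d(w) d'(w) = 2 d(w)(\Delta_1 - \Delta_2) \le 0$ on $[0, w^\star]$, giving $I'(w) \ge 0$.

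The main obstacle I expect is not the calculus — that is a one-line differentiation — but rather reverse-engineering the precise relationship the authors intend between the abstract discrepancies $\Delta_1, \Delta_2$ and the scalar error bound $c$, and justifying the assumption that $\sigma(w)$'s variation is negligible. The interval's right endpoint $\tfrac{|\Delta|}{2c} - \tfrac12$ only emerges if $\Delta_1$ and $\Delta_2 - \Delta_1$ are tied to $c$ in exactly the right way, so the real content of the proof is making that identification explicit and consistent with the earlier setup; the "assume the monotonicity is independent of $\sigma(w)$" hypothesis conveniently removes the one genuinely messy term (the derivative of $1/\sigma^2(w)$, which is $O(w)$ and would otherwise compete with $(d^2)'$), so I would flag that as the place where rigor is being traded for tractability and note that a fully rigorous version would require bounding $|(d^2)'\sigma^2 - d^2(\sigma^2)'|$ and is deferred.
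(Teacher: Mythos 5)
There is a genuine gap, and it lies precisely where you chose to simplify. The hypothesis ``the monotonicity of $I(w)$ is independent of $\sigma(w)$'' does not mean that $\sigma(w)$ may be frozen; in the paper it refers to the irrelevance of the sign of $\sigma$ (as in the proof of Lemma~\ref{lemma 4.1}), and the paper's proof of this lemma differentiates the full exponent $-d^2(w)/\bigl(8\sigma^2(w)\bigr)$ by the quotient rule, keeping the $\sigma'(w)$ term. For $d>0$ the condition $I'(w)\ge 0$ becomes $-d'(w)\sigma(w)+\sigma'(w)d(w)\ge 0$, which after substituting $\sigma(w)=[(1+w)^2+w^2]^{1/2}$ reduces to the linear inequality $(2\Delta+\Delta_1+\Delta_2)\,w+(\Delta+\Delta_2)\ge 0$. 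Your reduction ``$I$ increasing iff $d^2$ decreasing'' discards the term $\sigma'(w)d(w)$, which is not a small correction: $\sigma'(w)/\sigma(w)=(2w+1)/(2w^2+2w+1)$ equals $1$ at $w=0$. Worse, with $\sigma$ frozen your criterion gives the wrong conclusion on admissible data: take $\Delta>c>0$, $\Delta_1=c$, $\Delta_2=0$; then $d(w)=\Delta+c+cw$ is positive and increasing, so $d^2$ increases and your argument would declare $I$ decreasing on $[0,\tfrac{\Delta}{2c}-\tfrac12]$, whereas the paper's condition $(2\Delta+c)w+\Delta\ge 0$ holds there and $I$ is in fact increasing. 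The cancellation between $d'(w)\sigma(w)$ and $\sigma'(w)d(w)$ is exactly the mechanism the lemma relies on, and your simplification removes it.

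The second issue is how the endpoint $\tfrac{|\Delta|}{2c}-\tfrac12$ arises. In the paper $\Delta_1,\Delta_2$ are arbitrary (random) quantities ranging over $[-c,c]$, and the endpoint is a worst-case threshold: after showing that for $|\Delta|$ dominating (so that $2\Delta+\Delta_1+\Delta_2$ and $\Delta+\Delta_2$ carry the sign of $\Delta$) the derivative condition reduces to ``$d(w)$ has the sign of $\Delta$'', the paper requires this sign condition to hold for \emph{every} admissible $\Delta_1,\Delta_2$, i.e.\ $\min_{\Delta_1,\Delta_2\in[-c,c]}\bigl[(1+w)\Delta_1-w\Delta_2+\Delta\bigr]=\Delta-c(1+2w)\ge 0$ when $\Delta>0$ (and the mirror bound for $\Delta<0$), which is what yields $w\le\tfrac{|\Delta|}{2c}-\tfrac12$. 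Your plan instead pins $\Delta_1=\mp c$ and $\Delta_1-\Delta_2=\mp 2c$ by coefficient matching, treating the endpoint as the zero of $d$ for one extreme realization of the errors; that establishes nothing for other realizations, and the uniformity over $\Delta_1,\Delta_2\in[-c,c]$ is the actual content of the lemma. So both the calculus step and the identification of the interval need to be redone along the paper's lines rather than deferred.
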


\begin{proof}
	This proof aims to analyze the monotonic of the function $I(w)$ concerning the parameter $w$. The specific analysis process is as follows.
	
	(1) Firstly, based on different values of $x_a$ and $2\Delta + \Delta_1 + \Delta_2$, the monotonic increasing intervals of the function $I(w)$ are determined.
	
	(2) In these monotonically increasing intervals, only some cases satisfy the prerequisite condition $w > 0$. Therefore, it is necessary to ensure that regardless of the variations in $\Delta$, $\Delta_1$ and $\Delta_2$, both the values of $x_a$ and $2\Delta + \Delta_1 + \Delta_2$ remain within these feasible ranges.
    
    The detail proof  is provided in Appendix~\ref{lemma-3.2}.
\end{proof}

\subsubsection{Function Value Constraint}

Within the intersection region $J(w)$ between distributions $O_L$ and $O_G$, the function is increasing within the range $\left[0,\frac{\left|\Delta\right|}{2c}-\frac12\right]$. However, beyond the value of $\frac{\left|\Delta\right|}{2c}-\frac12$, the function may also continue to increase monotonically with a certain probability. Therefore, $J(\frac{\left|\Delta\right|}{2c}-\frac12)$ is not necessarily the maximum value. Furthermore, whether $J(\frac{\left|\Delta\right|}{2c}-\frac12)$ satisfies the value function constraint $J(w)=1$ is a question that warrants in-depth consideration. If $J(\frac{\left|\Delta\right|}{2c}-\frac12)$ does not satisfy the value function constraint $J(w)=1$, then adjusting the parameter $w$ to meet this constraint may conflict with the monotonic constraint of the function being monotonically increasing over the interval $\left[0,\frac{\left|\Delta\right|}{2c}-\frac12\right]$.

\begin{lemma}
	Let the constant $n$ as the balancing factor, determined experimentally. When $w=\frac{n\left|\Delta\right|}{2c}-\frac12$, both monotonic and value constraints are generally satisfied, achieving equilibrium.
	\label{lemma 4.3}
\end{lemma}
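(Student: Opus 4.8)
\textbf{Proof proposal for Lemma~\ref{lemma 4.3}.}
The plan is to reconcile the two competing requirements identified in the preceding subsections: the monotonic constraint, which forces $w \le \frac{|\Delta|}{2c} - \frac12$ so that $I(w)$ (hence $J(w)=2I(w)$) is increasing, and the value constraint $J(w) = 1$, which would require the two distributions $O_L$ and $O_G$ to coincide exactly, i.e.\ $d(w) = 0$ by Lemma~\ref{lemma 4.1}. First I would write out the value-constraint condition explicitly: setting $J(w) = e^{-d^2(w)/(8\sigma^2(w))} = 1$ forces $d(w) = (1+w)\Delta_1 - w\Delta_2 + \Delta = 0$, whose solution is $w^\star = \frac{\Delta + \Delta_1}{\Delta_2 - \Delta_1}$ (when $\Delta_1 \ne \Delta_2$). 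Since $\Delta_1, \Delta_2$ are only known to lie in $[-c,c]$ and are not individually observable, $w^\star$ cannot be pinned down; the best one can do is bound its magnitude in terms of $|\Delta|$ and $c$, which yields an estimate of the same form $\sim \frac{|\Delta|}{2c}$ up to a constant.

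The key step is then to argue that there is a single choice of $w$ that is simultaneously (i) inside — or on the boundary of — the monotonic interval $[0, \frac{|\Delta|}{2c} - \frac12]$ and (ii) as close as possible to the value-constraint root $w^\star$, so that $J(w)$ is as near to $1$ as the uncertainty in $\Delta_1,\Delta_2$ permits. I would parametrize this trade-off by a single scalar $n$: writing the candidate as $w = \frac{n|\Delta|}{2c} - \frac12$, the case $n = 1$ recovers exactly the right endpoint of the monotonic interval (maximal robustness gain under the guaranteed-increasing regime), while larger $n$ pushes $w$ toward the regime where $J(w)$ may still be increasing with positive probability and toward the value-constraint root, trading a guarantee for a better expected function value. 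The claim ``both constraints are generally satisfied'' should then be read as: for $n$ in a neighborhood of $1$ (the precise location determined empirically, since it depends on the unobservable $\Delta_1,\Delta_2$), the point $w = \frac{n|\Delta|}{2c} - \frac12$ keeps $d(w)$ small enough that $J(w)$ stays near its constrained optimum while $I(w)$ remains in its increasing phase — an equilibrium between the two pulls.

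I would finish by checking the boundary behavior and the consistency of the formula: at $n=1$ we land on the monotonic boundary where Lemma~\ref{lemma 4.2} guarantees monotone increase up to that point; substituting $w = \frac{n|\Delta|}{2c} - \frac12$ into $d(w)$ and using $|\Delta_i| \le c$ gives $|d(w)| \le (n+1)\frac{|\Delta|}{2} \cdot \frac{2c}{\,\cdot\,} $-type bounds showing $d(w)$, and hence $1 - J(w)$, is controlled by $n$; and one notes that as $c \to 0$ (well-trained networks) the admissible window widens, consistent with the intuition that cleaner networks tolerate larger $w$. The main obstacle is that ``both constraints are generally satisfied'' is inherently non-sharp: the value constraint $J(w)=1$ can only be met exactly for a specific $w^\star$ depending on quantities ($\Delta_1,\Delta_2$) the model cannot measure, so the lemma cannot be a clean equality statement. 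The honest formulation — and the one I would argue — is that $w = \frac{n|\Delta|}{2c} - \frac12$ is the unique one-parameter family interpolating between the monotonic-boundary choice and the value-constraint root, and that calibrating $n$ experimentally is precisely the mechanism by which the unobservable gap between these two is closed; the rest is substitution and sign-tracking already carried out in Appendices~\ref{Lemma-3.1} and~\ref{lemma-3.2}.
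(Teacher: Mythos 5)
There is a genuine gap: your argument never engages with the probabilistic mechanism that the paper's proof (Appendix~\ref{lemma-3.3}) actually uses to give meaning to ``generally satisfied.'' The paper treats $\Delta_1,\Delta_2$ as independent random variables on $[-c,c]$, forms $a=(1+w)\Delta_1$ and $b=w\Delta_2$, and convolves their densities to obtain the explicit trapezoidal probability density $p(d)$ of $d(w)=(1+w)\Delta_1-w\Delta_2+\Delta$, supported on $[-2cw-c+\Delta,\;2cw+c+\Delta]$. The whole content of the lemma is then read off from this density: for $\Delta>0$, the point $d=0$ lies inside the support only when $w\geq \frac{\Delta}{2c}-\frac12$, so the value constraint $J(w)=1$ (i.e.\ $d=0$) is attainable with positive density only by pushing $w$ \emph{beyond} the monotonic boundary, i.e.\ taking $n>1$ in $w=\frac{n|\Delta|}{2c}-\frac12$; at the same time, for such $w$ the mass of $p(d)$ on the ``wrong'' sign remains small, so the sign condition $d\geq 0$ (resp.\ $d\leq 0$ for $\Delta<0$) needed for monotonicity in Lemma~\ref{lemma 4.2} still holds with high probability. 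Your deterministic interpolation picture (``$n$ parametrizes a family between the monotonic endpoint and the root $w^\star$'') identifies the tension but proves neither half of this: without the density $p(d)$ you have no statement of the form ``$d(w)$ keeps the sign of $\Delta$ with high probability and has positive density near $0$,'' which is exactly what the lemma's ``equilibrium'' means in the paper.

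Two of your auxiliary claims also do not survive scrutiny. First, the root $w^\star=\frac{\Delta+\Delta_1}{\Delta_2-\Delta_1}$ cannot be ``bounded in magnitude $\sim\frac{|\Delta|}{2c}$ up to a constant'': the denominator $\Delta_2-\Delta_1$ can be arbitrarily small, so $w^\star$ is unbounded above; the only robust statement is the lower bound $|w^\star|\gtrsim\frac{|\Delta|}{2c}-\frac12$ (equivalently, the support condition above), which points in the opposite direction from the one you use. Second, reading ``generally satisfied'' as holding ``for $n$ in a neighborhood of $1$'' conflicts with the intended regime: the paper needs $n>1$ (strictly past the monotonic endpoint) for the value constraint to be reachable at all, and empirically calibrates $n\approx 185$. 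To close the gap you would need to reinstate the convolution computation of $p(d)$ and the resulting support/probability analysis at $w=\frac{n|\Delta|}{2c}-\frac12$.
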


\begin{proof}
	In analyzing the interrelationship between the monotonic constraint and the value constraint $J(w)=1$, the variable $d(w)$ plays an essential role. Taking $2\Delta + \Delta_1 + \Delta_2>0$ as an example, to ensure the monotonic increase of the function $J(w)$ for $w > 0$, $d(w)$ must remain non-negative. However, to satisfy the value constraint $J(w)=1$, $d(w)$ must be constantly zero. Although $d(w) = 0$ can simultaneously satisfy these two conditions, due to $\Delta_1$ and $\Delta_2$ being random variables, $d(w)$ itself possesses randomness and cannot be guaranteed to remain constant. Therefore, by analyzing the probability density function of $d(w)$ when $w=\frac{n\left|\Delta\right|}{2c}-\frac12$, we can observe how the balancing factor $n$ cleverly resolves the conflicts between these constraints. The detail proof  is provided in Appendix~\ref{lemma-3.3}.
\end{proof}

\section{Experiments and Discussion}
\label{experiments_discussion}
For the experiments detailed below, we employed two NVIDIA V100 GPUs with 32 gigabytes (GB) of memory each. To enhance reproducibility, we present our robust diffusion model framework through pseudocode. This standardized abstraction simplifies intricate operations, aiding comprehension and replication within the research community. The algorithmic workflow covers iterative training in the forward process of neural networks, as well as the design of filters and controllers in the reverse process. Please refer to Algorithms~\ref{alg-training-phase}$\sim$\ref{alg-sampling-phase} for details in Appendix~\ref{Experimental pseudocode}.
\subsection{Validation on MNIST Dataset }
\label{MNIST}
Exploring the determinacy of the balancing factor $n$ in the theory presents a challenge due to its elusive theoretical derivation. Our study adopts a semi-theoretical, semi-empirical approach, leveraging MNIST dataset to analyze the plausible range of $n$ values. The choice of MNIST dataset is twofold. Firstly, its relative simplicity allows us to focus on the impact of the balancing factor $n$. Secondly, conducting experiments under various systematic error conditions enhances the generalizability of the discovered balancing factor $n$ and provides a comprehensive assessment of its applicability.

To validate the generalizability of the balance factor $n$, we tested the performance of different parameter $w$ values under four distinct fixed errors $\Delta$~(0.3, -0.3, +3, -3). Through these experiments, we aimed to explore whether the range of values for the balance factor $n$ remains consistent across different error levels, thereby indirectly verifying the rationality of our approach. The experimental results are presented in Figures~\ref{fig.4}$\sim$~\ref{fig.5}, providing preliminary insights into the range of values for the balance factor $n$. Additional generation results are shown in Appendix~\ref{MNIST-}

\begin{figure}[H]
	\centering
	\begin{subfigure}[b]{0.2\textwidth}
		\includegraphics[width=\textwidth]{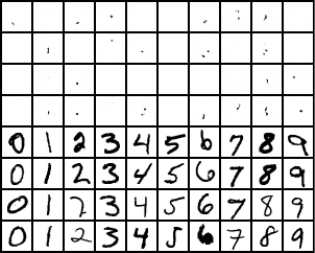}
		\caption{$w=0.5$}
		\label{fig:sub1}
	\end{subfigure}
	\hfill
	\begin{subfigure}[b]{0.2\textwidth}
		\includegraphics[width=\textwidth]{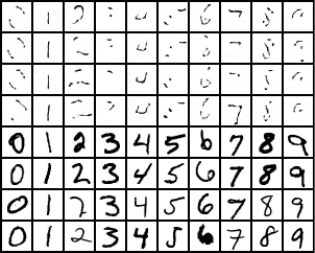}
		\caption{$w=5$}
		\label{fig:sub2}
	\end{subfigure}
	\hfill
	\begin{subfigure}[b]{0.2\textwidth}
		\includegraphics[width=\textwidth]{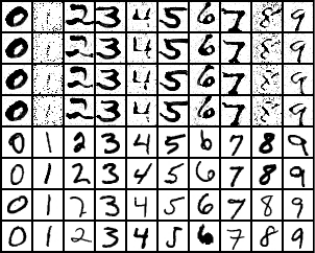}
		\caption{$w=50$}
		\label{fig:sub3}
	\end{subfigure}
	\hfill
	\begin{subfigure}[b]{0.2\textwidth}
		\includegraphics[width=\textwidth]{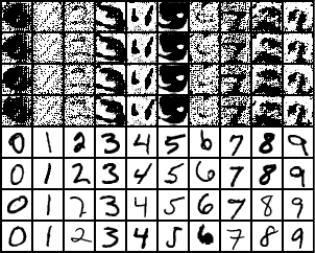}
		\caption{$w=500$}
		\label{fig:sub4}
	\end{subfigure}
	\caption{Reflections of \pmb{$\Delta= 0.3$} on different parameters $w$. With \pmb{$w=50$}, the model can generate numbers in the presence of errors.}
	\label{fig.4}
\end{figure}

\begin{figure}[H]
	\centering
	
	\begin{subfigure}[b]{0.2\textwidth}
		\includegraphics[width=\textwidth]{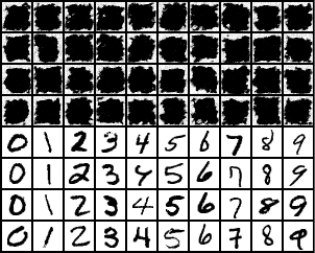}
		\caption{$w=0.5$}
		\label{fig:sub1}
	\end{subfigure}
	\hfill
	\begin{subfigure}[b]{0.2\textwidth}
		\includegraphics[width=\textwidth]{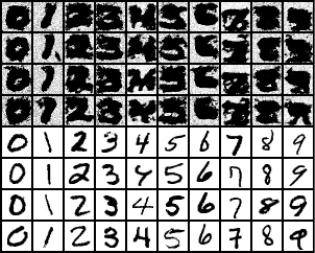}
		\caption{$w=5$}
		\label{fig:sub2}
	\end{subfigure}
	\hfill
	\begin{subfigure}[b]{0.2\textwidth}
		\includegraphics[width=\textwidth]{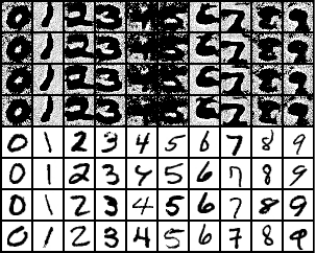}
		\caption{$w=50$}
		\label{fig:sub3}
	\end{subfigure}
	\hfill
	\begin{subfigure}[b]{0.2\textwidth}
		\includegraphics[width=\textwidth]{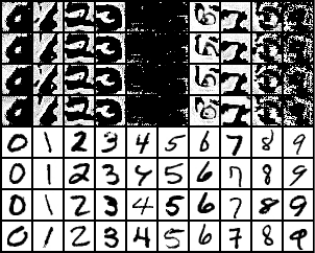}
		\caption{$w=500$}
		\label{fig:sub4}
	\end{subfigure}
	\caption{Reflections of \pmb{$\Delta= -0.3$} on different parameters $w$. With \pmb{$w=50$}, the model can generate numbers in the presence of errors.}
	\label{fig.5}
\end{figure}

In experiments on MNIST dataset , we studied how fixed error $\Delta$ affect parameter $w$ selection. We found that optimal model performance requires adjusting $w$ for different error levels. For fixed error $\Delta=\pm0.3$, $w=50$ was optimal, while $\Delta=3$ required $w=5000$. Analysis revealed lower image quality at $\Delta=-3$ compared to +3, possibly due to pixel normalization constraints, suggesting a model robustness threshold. Despite these variations, an optimal $n$ of around 185 was inferred from comparisons across error conditions. To maintain model robustness and image quality, we suggest adjusting $w$ based on error magnitude while keeping $n$ near 185. This strategy ensures adaptability to diverse error scenarios, supporting high-quality image generation for various applications.

\subsection{Validation on CIFAR-10 Dataset}
\label{CIFAR-10}

To ascertain the suitability of n=185 beyond MNIST dataset, we extended our experiments to the more complex CIFAR-10 dataset. The increased complexity of CIFAR-10 offers a stringent test for our model. We employed the Fréchet Inception Distance (FID) to mitigate the subjectivity inherent in visual assessment. FID evaluates the statistical similarity between the feature representations of generated and actual images, with a lower score indicating a closer distribution match and superior image generation quality.
Furthermore, we modified the neural network architecture and sampling mode alongside the dataset to comprehensively test the robustness of the balance factor $n$. This approach ensures that our conclusions are not confined to a specific neural network and sampling mode configuration, enhancing the generalizability of our findings.

The FID assessments across varied neural network architectures on CIFAR-10 dataset reinforce and expand upon the conclusions drawn from MNIST. This cross-architecture and cross-dataset validation strengthens the scientific rigor of our research and offers a more profound understanding of the model's performance across different conditions.

\begin{table}[h]
	\captionsetup{justification=raggedright,singlelinecheck=false} 
	\caption{Generated outcomes of $\Delta$ on $w$ across network structures and sampling modes. The optimal $w$ is set between the two bolded $w$.}
	\centering
	\begin{tabular}{c >{\centering\arraybackslash}m{0.7in} >{\centering\arraybackslash}m{0.7in} >{\centering\arraybackslash}m{0.8in} >{\centering\arraybackslash}m{0.8in}}
		\specialrule{1.2pt}{1pt}{1pt}
		\multirow{2}{*}{Sampling mode} & \multirow{2}{*}{$\Delta$} & \multirow{2}{*}{$w$} & \multicolumn{2}{c}{Network structure}  \\ 
		\cmidrule{4-5}
		& & & U-net & U-vit \\ 
		\cmidrule[\heavyrulewidth]{1-1}\cmidrule[\heavyrulewidth]{2-5}
		DDIM & 0 & 0 & \textbf{3.572} & \textbf{4.109} \\ 
		\cmidrule{1-5}
		\multirow{7}{*}{DDPM} & \multirow{4}{*}{0.03} & 0 & 96.745 & 97.630 \\
		& & 5 & 42.278 & 48.867 \\
		& & 10 & \textbf{37.934} & 41.627 \\
		& & 20 & \textbf{43.512} & \textbf{37.980} \\ 
		\cmidrule{2-5}
		& \multirow{3}{*}{0.3} & 0 & 543.202 & 609.914 \\
		& & 50 & \textbf{90.171} & \textbf{77.047} \\
		& & 100 & \textbf{105.414} & \textbf{87.249} \\ 
		\cmidrule{1-5}
		\multirow{7}{*}{DDIM} & \multirow{4}{*}{-0.03} & 0 & 62.066 & 66.079 \\
		& & 5 & \textbf{38.828} & 37.902 \\
		& & 10 & \textbf{41.387} & \textbf{35.453} \\
		& & 20 & 46.459 & \textbf{35.720} \\ 
		\cmidrule{2-5}
		& \multirow{3}{*}{-0.3} & 0 & 416.859 & 424.844 \\
		& & 50 & \textbf{83.808} & \textbf{66.923} \\
		& & 100 & \textbf{90.447} & \textbf{69.050} \\
		\specialrule{1.2pt}{1pt}{1pt}
	\end{tabular}
\end{table}

}

\begin{figure}[htbp]
	\centering
	\includegraphics[width=0.95\textwidth]{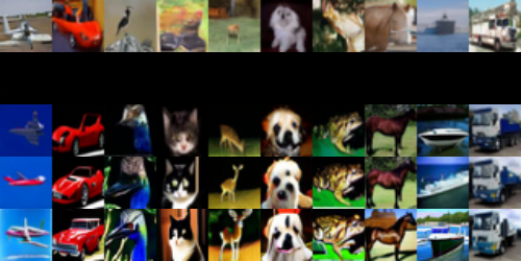}
	\caption{Visualization of the final generation results in \pmb{$\Delta=0.03$}, \textbf{DDPM} and \textbf{U-net} on CIFAR-10 dataset. The top row of the image illustrates the ideal state where the CDM generates images without any error. Subsequent 2$\sim$5 rows, demonstrate the images produced by RCDM with the \( w \) set to \textbf{0, 5, 10,} and \textbf{20}, respectively.} 
	\label{fig.21}
	\vspace{-10pt}
\end{figure}

In this study, we systematically investigated the impact of $w$ on the generative outcomes under various fixed error \( \Delta \) through a series of controlled experiments. The findings indicated a notable decrease in the influence of errors on the final generation results within a particular range of increasing  $w$, as evidenced by the decline in the FID score. Specifically, as \( w \) increased, the FID score generally showed a downward trend, indicating a reduction in the distributional difference between the generated and authentic images, thereby validating our hypothesis regarding the impact of \( w \) adjustment on generation quality. Furthermore, we examined the changes in the FID index under different network structures and sampling modes, further revealing the adaptability and effectiveness of \( w \) adjustment under various conditions. Experiments conducted on MNIST dataset indicated that setting the balance factor to 185 could significantly enhance the robustness of the diffusion model. Cross-validation revealed that the parameter \( w \) setting corresponding to this balancing factor could notably improve the model's resistance to error on CIFAR-10 dataset.
\begin{figure}[!ht]
	\centering
	\includegraphics[width=0.95\textwidth]{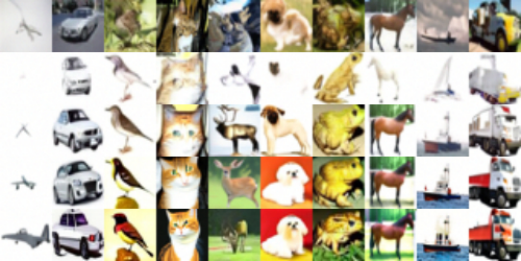}
	\caption{Visualization of the final generation results in \pmb{$\Delta=-0.03$}, \textbf{DDIM} and \textbf{U-vit} on CIFAR-10 dataset. The top row of the image illustrates the ideal state where CDM generates images without any error. Subsequent 2$\sim$5 rows, demonstrate the images produced by RCDM with the \( w \) set to 0, 5, 10, and 20, respectively.} 
	\label{fig.22}
\end{figure}

However, when we further increased $ w $, for instance, by adjusting the balance factor to 120, we observed a slight enhancement in the model's anti-interference capability. This phenomenon differs from the experimental results of MNIST dataset, which we attribute primarily to MNIST dataset's $ w $ being determined based on perceptual judgment and the sparse parameter settings in the experiments. Notably, in the experiments on CIFAR-10 dataset, when the neural network structure was U-vit and the sampling method was DDIM, the result with $ w $ set to 20 outperformed that with $ w $ set to 185, contrasting with other experimental outcomes. We speculate that this occurrence may stem from two reasons: firstly, the formula for setting the optimal $ w $ is a semi-empirical formula that combines theoretical analysis with practical experience and may have certain inaccuracies. Secondly, the range of optimal $ w $ determined experimentally is relatively limited.

Our findings suggest that setting the balance factor to 185 can provide sufficient robustness for the diffusion model in most cases. Pursuing a larger balance factor may result in only a slight improvement in robustness, which may not be cost-effective. In addition, the comparison between experimental results with and without error input demonstrates that although $ w $ can enhance the model's robustness and mitigate the impact of errors to some extent, it cannot eliminate errors. Therefore, reducing error input remains critical for improving the quality of generation outcomes.

These discoveries not only provide a solid theoretical foundation for parameter adjustment in generative models but also, have significant practical guidance for error control and quality optimization. Through these experiments, we have provided a systematic methodology for parameter optimization in generative models, which will guide future research and application development.

\section{Conclusion}

This study presents the Robust Conditional Diffusion Model (RCDM), which enhances the robustness of diffusion models by considering collaborative interaction between two neural networks based on control theory. RCDM optimizes the weighting of two neural networks during sampling without additional computational costs, maintaining efficiency and adaptability. Experimental results on MNIST and CIFAR-10 datasets demonstrate significant improvements in robustness without compromising inference speed, making it practical for real-time applications. Our proposed model is a trade-off strategy that reduces the diversity of model outputs to some extent but still achieves acceptable results in extreme cases. Overall, this work strengthens diffusion model robustness, advancing neural network-based machine learning systems and enabling more reliable AI solutions.


\bibliographystyle{plain}
\bibliography{reference}


\appendix
\renewcommand{\thefigure}{A.\arabic{figure}}
\setcounter{figure}{0}
\section{Appendix}

\subsection{Conditional Diffusion Model}
\label{Diffusion model}

Conditional diffusion models are identical to diffusion models in that they have two processes, i.g. forward process and reverse process. The mathematical mechanism of conditional diffusion models can be summarized as follows.

\textbf{Forward Process.} During the forward diffusion phase, the model gradually adds Gaussian noise to the data through a series of timesteps $t$. At each step, the data $x_{t-1}$ is perturbed by Gaussian noise with variance $\beta_t$ to generate a new latent variable $x_{t}$. 
\begin{equation}
	q(x_t|x_{t-1})=N(x_t;\sqrt{1-\beta_t}x_{t-1},\beta_t\mathbf{I}),
	\label{0.1}
\end{equation}
\textbf{Reverse Process.} After numerous steps in forward diffusion, the data $x_{T}$ approaches a Gaussian distribution. The model must learn to recover the original data from the noise through the reverse process, typically achieved by training a neural network to predict the mean $\boldsymbol{\mu}_\theta(\mathbf{x}_t,t)$ and variance $\boldsymbol{\Sigma}_\theta(\mathbf{x}_t,t)$ of each step.
\begin{equation}
	p_\theta(\mathbf{x}_{t-1}|\mathbf{x}_t)=\mathcal{N}(\mathbf{x}_{t-1};\boldsymbol{\mu}_\theta(\mathbf{x}_t,t),\boldsymbol{\Sigma}_\theta(\mathbf{x}_t,t)),
	\label{0.2}
\end{equation}
\textbf{Conditional Guidance.} Conditional Diffusion Models guide the generation process by incorporating additional conditional information (e.g., class labels or text embeddings). The incorporation of the condition $y$ into each step of the diffusion process is what enables the generation mechanism to be contingent upon this conditioning data.
\begin{equation}
	p_\theta(\mathbf{x}_{0:T}|y)=p_\theta(\mathbf{x}_T)\prod_{t=1}^Tp_\theta(\mathbf{x}_{t-1}|\mathbf{x}_t,y),
	\label{0.3}
\end{equation}
\textbf{Joint Training.} In~\cite{ho2022classifier}, the conditional diffusion model is trained jointly with the unconditional model, allowing the model to learn how to generate both conditional and unconditional samples.
\begin{equation}
	\tilde{\varepsilon}_\theta(x_t,y,t)=(1+w)\varepsilon_\theta(x_t,y,t)-w\varepsilon_\theta(x_t,t),
	\label{0.4}
\end{equation}

\subsection{Experimental Pseudocode}
\label{Experimental pseudocode}
In this section, we provided pseudocode for implementing RCDM to facilitate a more direct comprehension of the algorithm's logical sequence for the reader. RCDM shares structural similarities with CDM, both comprising a forward and a reverse process. During the forward process, training involves both a non-conditional neural network and a conditional neural network. The core advantage of RCDM lies in its reverse process, where the weight $w$ of both networks can be dynamically adjusted based on the magnitude of the error, thereby enhancing the model's robustness. Consequently, RCDM, through the dynamic adjustment of $w$ during the reverse process, strengthens its capability to generate high-quality images in environments with interference.

\begin{figure}[H]
	\centering
	
	\begin{minipage}[t]{0.45\textwidth}
		\begin{algorithm}[H]
			
			\caption{Training phase}
			\label{alg-training-phase}
			\begin{algorithmic}[1]
				\setstretch{1.25}
				\State \textbf{repeat}
				\State $\mathbf{x}_0\sim q(\mathbf{x}_0)$
				\State $t\sim\operatorname{Uniform}(\{1,\ldots,T\})$
				\State $\boldsymbol{\varepsilon}\sim\mathcal{N}(\mathbf{0},\mathbf{I})$
				\State Stepwise gradient descent
				\Statex
				\quad  Unconditional neural network:
				\Statex
				\quad \quad $\nabla_\theta\left\|\boldsymbol{\varepsilon}-\boldsymbol{\varepsilon}_\theta(\sqrt{\bar{\alpha}_t}\mathbf{x}_0+\sqrt{1-\bar{\alpha}_t}\boldsymbol{\varepsilon},t)\right\|^2$
				\Statex \quad	Conditional neural network
				\Statex \quad \quad 
				$\nabla_\theta\left\|\boldsymbol{\varepsilon}-\boldsymbol{\varepsilon}_\theta(\sqrt{\bar{\alpha}_t}\mathbf{x}_0+\sqrt{1-\bar{\alpha}_t}\boldsymbol{\varepsilon},y,t)\right\|^2$
				\State \textbf{until} converged
			\end{algorithmic}
		\end{algorithm}
	\end{minipage}
	\hfill 
	\begin{minipage}[t]{0.45\textwidth}
		\begin{algorithm}[H]
			
			\caption{Sampling phase}
			\label{alg-sampling-phase}
			\begin{algorithmic}[1]
				\setstretch{1.25}
				\State $\mathbf{x}_T\sim\mathcal{N}(\mathbf{0},\mathbf{I})$
				\State $\textbf{for }t=T,\ldots,1\textbf{ do}$
				\State $\mathbf{z}\sim\mathcal{N}(\mathbf{0},\mathbf{I})\mathrm{~if~}t>1,\text{else }\mathbf{z}=\mathbf{0}$
				\State Filter math formula
				\Statex $\quad\quad \quad \quad \quad w=\frac{\mathrm{n}|\Delta|}{2c}-\frac12$
				\State Controller signal 
				\Statex ${\tilde{\varepsilon}_\theta=\left[(1+w)\varepsilon_{\theta}(x_{t},y,t)-w\varepsilon_{\theta}(x_{t},t)\right]}$
				\State Generator signal 
				\Statex $\mathbf{x}_{t-1}=\frac1{\sqrt{\alpha_t}}\left(\mathbf{x}_t-\frac{1-\alpha_t}{\sqrt{1-\bar{\alpha}_t}}\tilde{\varepsilon}_\theta\right)+\sigma_t\mathbf{z}$ 
				\State $\textbf{end for}$
				\State $\textbf{return x}_0$
			\end{algorithmic}
		\end{algorithm}
	\end{minipage}
\end{figure}

\subsection{Proofs}
In this section, we provided the proofs for our main theoretical results (Theorem~\ref{theorem 3.1} and Lemmas~\,\ref{lemma 4.1}~$\sim$~\ref{lemma 4.3}). 

\subsubsection{Proof for Theorem \ref{theorem 3.1}}
\label{Proof for Theorem 3.1}
We utilized mathematical induction to examine the accumulation of the final error $\Delta$.
\begin{equation}
	\begin{aligned}
		&X_{T}:0 \\
		&X_{T-1}:-\frac{\beta_T^2}{\sqrt{\alpha_T}}\frac{\beta_T}{\beta_T}\Delta \\
		&X_{T-2}:-\Bigg(\frac{\beta_T^2}{\sqrt{\alpha_{T-1}\alpha_T}\beta_T}+\frac{\beta_{T-1}^2}{\sqrt{\alpha_{T-1}}\overline{\beta_{T-1}}}\Bigg)\Delta \\
		&\cdots\cdots \\
		&X_0:-\Bigg(\frac{\beta_T^2}{\sqrt{\overline{\alpha}_T}\overline{\beta}_T}+\frac{\beta_{T-1}^2}{\sqrt{\overline{\alpha}_{T-1}}\overline{\beta}_{T-1}}+\cdots\frac{\beta_1^2}{\sqrt{\alpha_1}\beta_1}\Bigg)\Delta 
		\label{1.5}
	\end{aligned}
\end{equation}
Here, $\beta_T=\sqrt{1-\alpha_T} ,\quad\overline{\beta}_T=\sqrt{1-\overline{\alpha}_T}$. We focused on the maximum term $-\frac{\beta_T^2}{\sqrt{\overline{\alpha}_T}\overline{\beta_T}}\Delta=-\frac{1-a_T}{\sqrt{\overline{\alpha}_T(1-\overline{\alpha}_T)}}\Delta $. When $T$ is large and $\overline{\alpha}_T$ is very small, any inaccuracies in the input data that result in systematic errors in the neural network's output will cause the maximum term to become exceedingly large. Thus, the errors arising from the neural network's response to input inaccuracies will undergo substantial amplification through coefficient multiplication, ultimately leading to a loss of control over the final generation effect.

\subsubsection{Proof for Lemma~\ref{lemma 4.1}}
\label{Lemma-3.1}

Due to the symmetry of the area where $O_G$ intersects $O_L$, half the size of its area reflects the variation of the intersecting area with the parameter $ w $, as shown in Figure~\ref{Description of the area function}

\begin{figure}[h]
	\centering
	\includegraphics[width=0.9\textwidth]{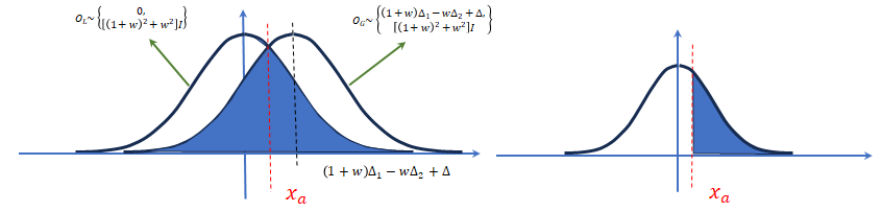}
	\caption{Description of the area function $J(w)$.} 
	\label{Description of the area function}
\end{figure}

According to the nature of Gaussian distribution, the expression of this area function as follows.

(1) $x_a\geq0$

\begin{equation}
	I(w)=\int_{\frac{d(w)}2}^{+\infty}\frac1{\sqrt{2\pi}\sigma(w)}e^{-\frac{x^2}{2\sigma(w)^2}}dx
	\label{18}
\end{equation}

(2) $x_a<0$

\begin{equation}
	I(w)=\int_{-\infty}^{\frac{-d(w)}2}\frac1{\sqrt{2\pi}\sigma(w)}e^{-\frac{x^2}{2\sigma(w)^2}}dx
	\label{19}
\end{equation}

We refered to the method of solving the integral probability density function of the Gaussian distribution.
For $x_a\geq0$, let $z=\frac x\sigma$, then $dx=\sigma(w)dz$. Given that $\frac d2\leq x\leq+\infty$, it follows that $\frac d{2\sigma}\leq z\leq+\infty$. We assumed the standard deviation $\sigma>0$. The case for $\sigma<0$ yields identical results in the subsequent derivation of the proof, making the discussion of the sign of $\sigma$ moot. The condition $\sigma\neq0$ is sufficient for our analysis.
\begin{equation}
	I(w)=\int_{\frac{d(w)}{2\sigma(w)}}^{+\infty}\frac{1}{\sqrt{2\pi}}e^{-\frac{z^2}{2}}dz
	\label{20}
\end{equation}

\begin{equation}
	I^2(w)=\int_{\frac{d(w)}{2\sigma(w)}}^{+\infty}\frac{1}{\sqrt{2\pi}}e^{-\frac{a^2}{2}}da\int_{\frac{d(w)}{2\sigma(w)}}^{+\infty}\frac{1}{\sqrt{2\pi}}e^{-\frac{b^2}{2}}db=\frac{1}{2\pi}\int_{\frac{d(w)}{2\sigma(w)}}^{+\infty}\int_{\frac{d(w)}{2\sigma(w)}}^{+\infty}e^{-\frac{(a^2+b^2)}{2}}dadb
	\label{21}
\end{equation}

Let $a=rcos\theta,b=rsin\theta$ in polar coordinates, then $dadb=rdrd\theta$.

\begin{align}
	I^{2}(w)
	& = \frac{1}{2\pi}\int_{0}^{\frac{\pi}{2}}d\theta\int_{\frac{\sqrt{2}d(w)}{2\sigma(w)}}^{+\infty}e^{-\frac{r^{2}}{2}}rdr \notag\\
	&=\quad\frac14\int_{\frac{\sqrt{2}d(w)}{2\sigma(w)}}^{+\infty}e^{-\frac{r^2}2}d\frac{r^2}{2} \notag\\
	&=\quad\frac14\int_{\frac{d^2(w)}{4\sigma^2(w)}}^{+\infty}e^{-c}dc \notag\\
	&=\quad\frac14(-e^{-c})|_{\frac{d^2(w)}{4\sigma^2(w)}}^{+\infty} \notag\\
	&=\begin{array}{c}\frac14e^{-\frac{d^2(w)}{4\sigma^2(w)}}\end{array}
	\label{22}
\end{align}

When $x_a\geq0$, the function $I(w) = \frac12e^{-\frac{d^2(w)}{8\sigma^2(w)}}$. Similarly, when $x_a<0$, the function $I(w) = \frac12e^{-\frac{d^2(w)}{8\sigma^2(w)}}$. In conclusion, regardless of the value of $x_a$, the function $I(w) =\frac12e^{-\frac{d^2(w)}{8\sigma^2(w)}}$.

\subsubsection{Proof for Lemma~\ref{lemma 4.2}}
\label{lemma-3.2}
Obviously, the function $J(w)$ agrees with the monotonic of the function $I(w)$ as in Equation \ref{function}.

(1) When $x_a>0$, that is $d(w)>0$. 

\begin{align}
	I^{\prime}(w)=\frac12e^{-\frac{d^2(w)}{8\sigma^2(w)}}\cdot(-\frac{d^2(w)}{8\sigma^2(w)})^{\prime}
	&\geq\quad0
	\notag\\
	(-\frac{d^2(w)}{8\sigma^2(w)})^{\prime}&\geq\quad0
	\notag\\
	-\frac{16d^{\prime}(w)d(w)\sigma^2(w)-16\sigma^{\prime}(w){\sigma(w)}d^2(w)}{64\sigma^{4}(w)}&\geq\quad0
	\notag\\
	-d^{\prime}(w)d(w)\sigma^2(w)+\sigma^{\prime}(w){\sigma(w)}d^2(w)&\geq\quad0
	\notag\\
	-d^{\prime}(w)\sigma(w)+\sigma^{\prime}(w)d(w)&\geq\quad0
	\label{24}
\end{align}

Where $\sigma(w)=[(1+w)^{2}+w^{2}]^{\frac{1}{2}}$,  $d(w)=\Delta_{1}+\Delta+w(\Delta_{1}-\Delta_{2})$. Let $D=(1+w)^{2}+w^{2}$, $f=\Delta_{1}+\Delta$, $C=\Delta_1-\Delta_2$. Then,

\begin{align}
	D^{-\frac{1}{2}}\left(2w+1\right)[\Delta_{1}+\Delta+wC]-CD^{\frac{1}{2}}& \geq0 \notag\\
	[2w+1](f+wC)-C[2w^{2}+2w+1]& \geq0 \notag\\
	[2f-C]w+f-C& \geq0 \notag\\
	\begin{aligned}[2(\Delta_1+\Delta)-(\Delta_1-\Delta_2)]w+(\Delta+\Delta_2)\end{aligned}& \geq0 \notag\\
	(2\Delta+\Delta_2+\Delta_1)w+\Delta+\Delta_2& \geq0 \notag\\
	(2\Delta+\Delta_2+\Delta_1)w& \geq-(\Delta+\Delta_2) 
	\label{25}
\end{align}

Thus,

if $2\Delta+\Delta_1+\Delta_2>0$, the function $I(w)$ is monotonically increasing when $w\geq -\frac{\Delta+\Delta_2}{2\Delta+\Delta_2+\Delta_1}$;

if $2\Delta+\Delta_1+\Delta_2<0$, the function $I(w)$ is monotonically increasing when $w\leq -\frac{\Delta+\Delta_2}{2\Delta+\Delta_2+\Delta_1}$;

if $2\Delta+\Delta_1+\Delta_2=0$, the function $I(w)$ is monotonically increasing regardless of the value of $w$.

(2) When $x_a<0$, that is $d(w)<0$.

Similarly, we can obtain,

if $2\Delta+\Delta_1+\Delta_2>0$, the function $I(w)$ is monotonically increasing when $w\leq -\frac{\Delta+\Delta_2}{2\Delta+\Delta_2+\Delta_1}$;

if $2\Delta+\Delta_1+\Delta_2<0$, the function $I(w)$ is monotonically increasing when $w\geq -\frac{\Delta+\Delta_2}{2\Delta+\Delta_2+\Delta_1}$;

if $2\Delta+\Delta_1+\Delta_2=0$, the function $I(w)$ is monotonically increasing regardless of the value of the parameter $w$.

Next focus on $-\frac{\Delta+\Delta_2}{2\Delta+\Delta_2+\Delta_1}$, let $\Delta_1-\Delta_2=h(\Delta+\Delta_2)$. 

\begin{align}
	-\frac{\Delta + \Delta_2}{2\Delta + \Delta_2 + \Delta_1} &= -\frac{1}{2 + \frac{b}{\Delta + \Delta_2}} \notag \\
	&= -\frac{1}{2 + \frac{h(\Delta + \Delta_2)}{\Delta + \Delta_2}} \notag \\
	&= -\frac{1}{2 + h}
	\label{26}
\end{align}

Neural network in the training process $\varepsilon_\theta\left(x_t,y,t\right),\varepsilon_\theta\left(x_t,t\right)$ using the same set of network structure, and in the code practice, both of them are the same super parameter. Thus $\Delta_1- \Delta_2$ is small, and then $h$ is the small value whose positive or negative cannot be determined. But the $n$ value, being very small, does not affect the overall $-\frac{\Delta+\Delta_2}{2\Delta+\Delta_2+\Delta_1}$ positively or negatively. 

\begin{equation}
	-\frac{\Delta+\Delta_2}{2\Delta+\Delta_2+\Delta_1} = -\frac{1}{2+n} <0
	\label{27}
\end{equation}

When $2\Delta+\Delta_1+\Delta_2>0$ and $d>0$, $w>0$ increasing can improve the robustness of the conditional diffusion model. However, if increasing $2\Delta+\Delta_1+\Delta_2<0$ and $d>0$, $w>0$ increasing does not improve the robustness of the conditional diffusion model. The analysis of all the situations are summarized in Figure ~\ref{图8}.

\begin{figure}[!htbp]
	\centering
	\includegraphics[width =0.8\textwidth]{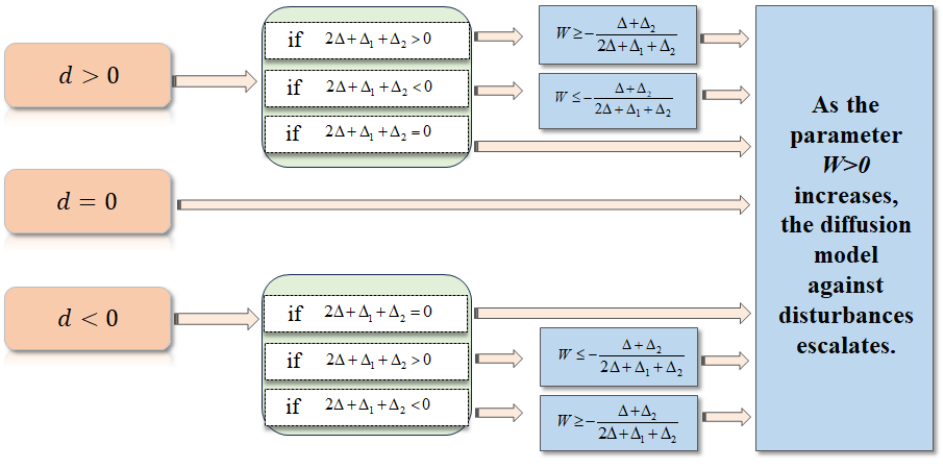}
	\caption{Situation analysis}
	\label{图8}
\end{figure}

In simple terms, when $\Delta$ is sufficiently large or small, the $2\Delta+\Delta_1+\Delta_2$ matches that of $\Delta$. It leads to the conclusion that if $\Delta>0$, as long as $d\geq0$, increasing the parameter $w$ enhances the robustness of the diffusion model against interference. Similarly, if $\Delta<0$, as long as $d\leq 0$, increasing the parameter $w$ improves the model's resistance to interference. Therefore, we only need to consider the $d$ in different scenarios with varying $\Delta$. Where $\Delta_2$ and $\Delta_1$ are variables that vary within the range of $[-c,c]$.

(1)$\Delta>0$
\begin{align}
	d=(1+w)\Delta_1-w\Delta_2+\Delta\geq0 \notag\\
	[(1+w)\Delta_1-w\Delta_2+\Delta]_{min}\geq0\notag\\
	[-c(1+2w)+\Delta]\geq0\notag\\
	w\leq\frac{\Delta}{2c}-\frac{1}{2}
	\label{28}
\end{align}
When $\Delta>0$ and is sufficiently large, within the range of $[0,\frac{\Delta}{2c}-\frac{1}{2}]$, the function $I(w)$ monotonically increases as the parameter $w$ increases.

(2)$\Delta<0$
\begin{align}
	d=(1+w)\Delta_1-w\Delta_2+\Delta\leq0 \notag\\
	[(1+w)\Delta_1-w\Delta_2+\Delta]_{max}\leq0\notag\\
	[c(1+2w)+\Delta]\leq0\notag\\
	w\leq-\frac{\Delta}{2c}-\frac{1}{2}
	\label{29}
\end{align}
When $\Delta<0$ and is sufficiently large, within the range of $[0,-\frac{\Delta}{2c}-\frac{1}{2}]$, the function $I(w)$ monotonically increases as the $w$ increases.

In summary, the function $I(w)$ must be monotonically increasing in the range $\left[0,\frac{\left|\Delta\right|}{2c}-\frac12\right]$.

\subsubsection{Proof for Lemma \ref{lemma 4.3}}
\label{lemma-3.3}

Let the random variable $a=(1+w)\Delta_{1}$ with $a\in[-(1+w)c,(1+w)c]$, and $b=w\Delta_{2}$ with $b\in[-wc,wc]$, where $\Delta_1$ and $\Delta_2$ are independent variables within the range of $[-c,c]$. Then the probability density function $p(d)$ of $d(w)=(1+w)\Delta_1-w\Delta_2+\Delta$ can be solved as follows.
\begin{equation}
	p(d) = 
	\begin{cases}
		\frac{2cw + c + d}{4c^2w(w + 1)}\, & \text{if } -2cw - c + \Delta \leq d \leq -c + \Delta, \vspace{0.5em} \\
		\frac{1}{2c(w + 1)}\, & \text{if } -c + \Delta \leq d \leq c + \Delta, \vspace{0.5em} \\
		\frac{2cw + c - d}{4c^2w(w + 1)}\, & \text{if } c + \Delta \leq d \leq 2cw + c + \Delta.
	\end{cases}
	\label{28}
\end{equation}
Under different values of $\Delta$, the probability density function $p(d | w)$ exhibits sensitivity to $w$. Specifically, when $w$ takes the values of $\frac{n\left|\Delta\right|}{2c}-\frac12$ and $\frac{\Delta}{2c}-\frac12$, respectively, we have illustrated the trend of $p(d | w)$ through graphical representations. The specific changes of $p(d | w)$ are shown in Figure \ref{图9}.

\begin{figure}[!ht]
	\centering
	\includegraphics[width =0.8\textwidth]{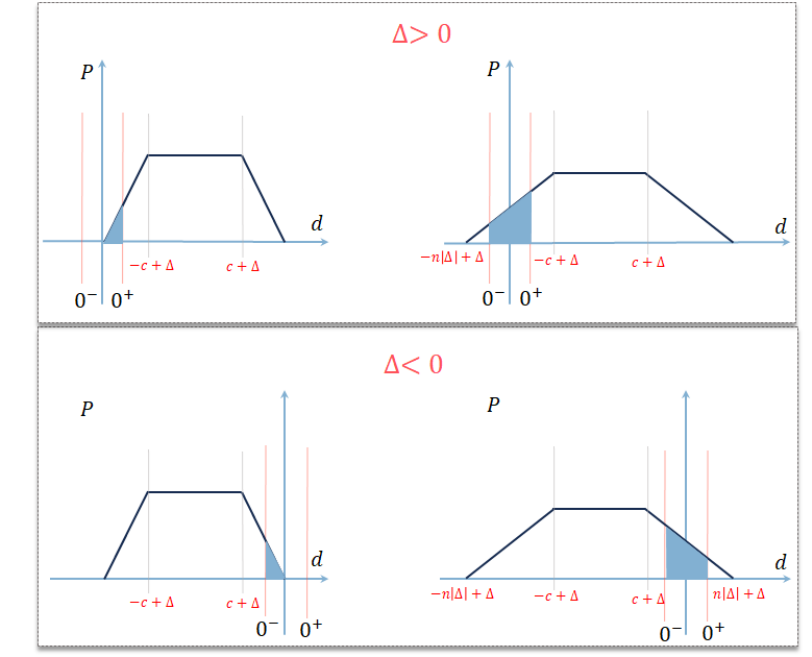}
	\caption{Schematic of balance factor n role}
	\label{图9}
\end{figure}

The increase of the balance factor $n$ significantly affects the function $d(w)$. When $\Delta>0$, the variable $d(w)$ still has a high probability of satisfying $d(w)\geq 0$ as $n$ increases. When $\Delta<0$, $d(w)$ still has a high probability of satisfying $d(w)\leq 0$. Additionally, increasing  $n$ allows $d(w)$ to maintain a certain probability near zero. This adjustment effectively balances the fulfillment of function value constraints and the maintenance of monotonic.

\subsection{Experimental Results}
\subsubsection{Extended Results on MNIST Dataset}
\label{MNIST-}

Herein, we present the experimental outcomes from Section \ref{MNIST} in detail as a supplement to the results discussed in the main paper. Figures \ref{fig.6}$\sim$\ref{fig.7} illustrate the results and provide visual evidence of the phenomena observed during the experiments.

\begin{figure}[H]
	\centering
	
	\begin{subfigure}[b]{0.2\textwidth}
		\includegraphics[width=\textwidth]{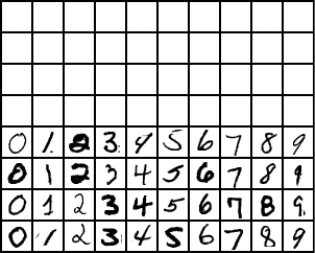}
		\caption{$w=5$}
		\label{fig:sub1}
	\end{subfigure}
	\hfill
	\begin{subfigure}[b]{0.2\textwidth}
		\includegraphics[width=\textwidth]{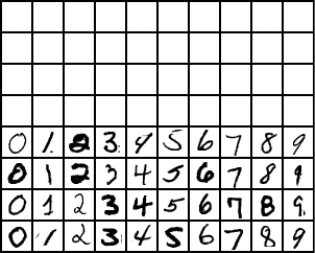}
		\caption{$w=50$}
		\label{fig:sub2}
	\end{subfigure}
	\hfill
	\begin{subfigure}[b]{0.2\textwidth}
		\includegraphics[width=\textwidth]{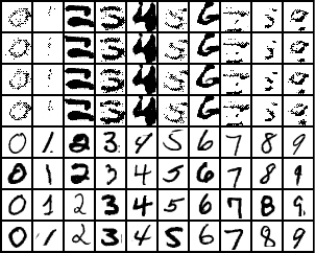}
		\caption{$w=500$}
		\label{fig:sub3}
	\end{subfigure}
	\hfill
	\begin{subfigure}[b]{0.2\textwidth}
		\includegraphics[width=\textwidth]{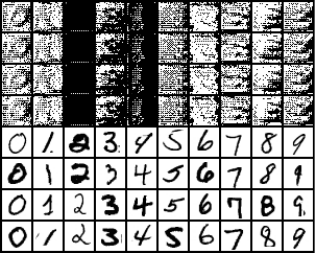}
		\caption{$w=5000$}
		\label{fig:sub4}
	\end{subfigure}
	\caption{Reflections of \pmb{$\Delta= 3$} on different valves of $w$. With \pmb{$w=500$}, the model can generate numbers in the presence of errors.}
	\label{fig.6}
\end{figure}

\begin{figure}[H]
	\centering
	
	\begin{subfigure}[b]{0.2\textwidth}
		\includegraphics[width=\textwidth]{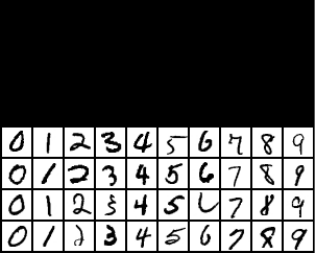}
		\caption{$w=5$}
		\label{fig:sub1}
	\end{subfigure}
	\hfill
	\begin{subfigure}[b]{0.2\textwidth}
		\includegraphics[width=\textwidth]{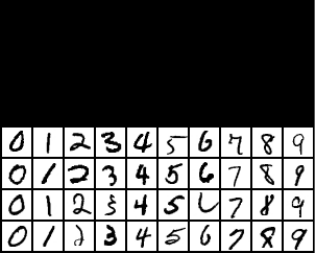}
		\caption{$w=50$}
		\label{fig:sub2}
	\end{subfigure}
	\hfill
	\begin{subfigure}[b]{0.2\textwidth}
		\includegraphics[width=\textwidth]{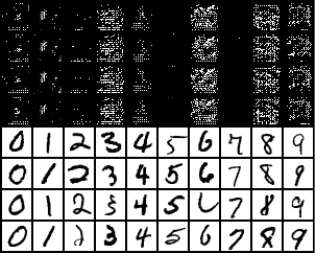}
		\caption{$w=500$}
		\label{fig:sub3}
	\end{subfigure}
	\hfill
	\begin{subfigure}[b]{0.2\textwidth}
		\includegraphics[width=\textwidth]{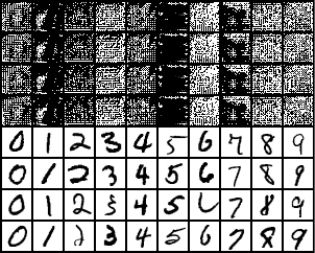}
		\caption{$w=5000$}
		\label{fig:sub4}
	\end{subfigure}
	\caption{Reflections of \pmb{$\Delta= -3$} on different valves of $w$.}
	\label{fig.7}
\end{figure}


\subsubsection{Extended Results on CIFAR-10 Dataset}
To ensure the comprehensiveness and transparency of this paper, following the detailed descriptions of our experimental design, procedures, and key findings, we provide supplementary graphical materials in this section. These materials include comparative visualizations across various network structures, the effectiveness of different sampling methods, and the impact of varying $w$ values under different error conditions. 
\begin{figure}[!ht]
	\centering
	\includegraphics[width=1.0\textwidth]{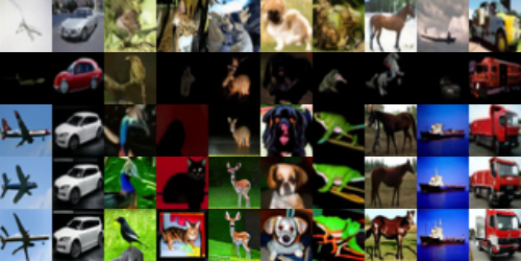}
	\caption{Visualization of the final generation results in \pmb{$\Delta=0.03$}, \textbf{DDPM} and \textbf{U-vit} on CIFAR-10 dataset.The top row of the image illustrates the ideal state where the CDM generates images without any error. Subsequent 2$\sim$5 rows, demonstrate the images produced by RCDM with the  \( w \) set to 0, 5, 10, and 20, respectively.} 
	\label{fig.23}
\end{figure}

\begin{figure}[H]
	\centering
	\includegraphics[width=1.0\textwidth]{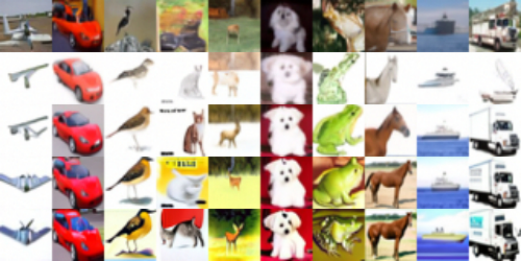}
	\caption{Visualization of the final generation results in \pmb{$\Delta=-0.03$}, \textbf{DDIM} and \textbf{U-net} on CIFAR-10 dataset.The top row of the image illustrates the ideal state where CDM generates images without any error. Subsequent 2$\sim$5 rows, demonstrate the images produced by RCDM with \( w \) set to 0, 5, 10, and 20, respectively.} 
	\label{fig.24}
\end{figure}

\begin{figure}[!ht]
	\centering
	\includegraphics[width=1.0\textwidth]{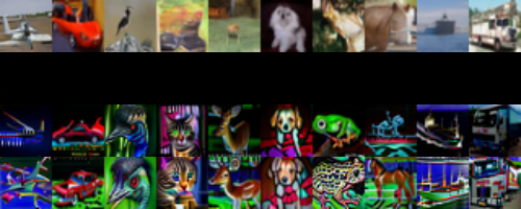}
	\caption{Visualization of the final generation results in \pmb{$\Delta=0.3$}, \textbf{DDPM} and \textbf{U-net} on CIFAR-10 dataset.The top row of the image illustrates the ideal state where CDM generates images without any error. Subsequent 2$\sim$4 rows, demonstrate the images produced by RCDM with \( w \) set to 0, 50, and 100, respectively.} 
	\label{fig.25}
\end{figure}

\begin{figure}[!ht]
	\centering
	\includegraphics[width=1.0\textwidth]{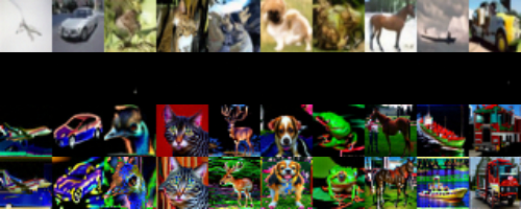}
	\caption{Visualization of the final generation results in \pmb{$\Delta=0.3$}, \textbf{DDPM} and \textbf{U-vit} on CIFAR-10 dataset.The top row of the image illustrates the ideal state where CDM generates images without any error. Subsequent 2$\sim$4 rows, demonstrate the images produced by RCDM with \( w \) set to 0, 50, and 100, respectively.} 
	\label{fig.26}
\end{figure}

\begin{figure}[H] 
	\centering
	\includegraphics[width=1.0\textwidth]{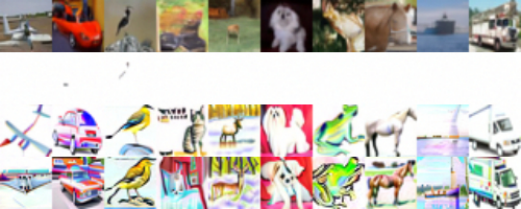}
	\caption{Visualization of the final generation results in \pmb{$\Delta=-0.3$}, \textbf{DDIM} and \textbf{U-net} on CIFAR-10 dataset.The top row of the image illustrates the ideal state where CDM generates images without any error. Subsequent 2$\sim$4 rows, demonstrate the images produced by RCDM with \( w \) set to 0, 50, and 100, respectively.} 
	\label{fig.27}
\end{figure}

\begin{figure}[H]
	\centering
	\includegraphics[width=1.0\textwidth]{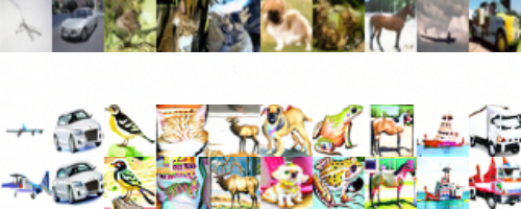}
	\caption{Visualization of the final generation results in \pmb{$\Delta=-0.3$}, \textbf{DDIM} and \textbf{U-vit} on CIFAR-10 dataset.The top row of the image illustrates the ideal state where CDM generates images without any error. Subsequent 2$\sim$4 rows, demonstrate the images produced by RCDM with \( w \) set to 0, 50, and 100, respectively.} 
	\label{fig.28}
\end{figure}

\subsection{Limitations}
While the advancements in RCDM have provided some resistance to errors for generative tasks. However,  there are several intrinsic challenges and limitations that need to be addressed. These challenges not only pertain to the technical aspects of model development but also to the practical considerations of deploying these models in real-world applications. These include refining the quantification of input errors, addressing the impact of stochastic noise, and balancing the trade-off between output diversity and model robustness. In image generation, bolstering network robustness might reduce output diversity, which could impair the task's goal of variety. However, this trade-off may be tolerable in prediction tasks where diversity is less crucial.

\subsection{Broader Impact}
Advancements in robust conditional diffusion models offer significant advantages. Firstly, these models are capable of dynamically adjusting $w$ based on errors, enhancing their robustness. Secondly, they have practical implications for improving the usability and generalization of conditional diffusion models. The potential for societal impact is substantial, as these models can reshape access to high-quality generative AI, lower computational barriers, and drive innovation across creative disciplines.
Ethical guidelines and verification methods are essential to ensure the technology's positive contributions, prevent misuse, and foster new job opportunities centered around AI-assisted creativity. Balancing these aspects is key to enabling the technology's constructive influence on various sectors and the broader community.

\subsection{Safeguards}
We are dedicated to safeguarding the personal information with robust encryption and stringent access controls, ensuring the data remains secure and confidential.
Our commitment to global data protection standards and continuous security enhancements upholds the trust you place in us.

\end{document}